\documentclass[conference]{IEEEtran}
\IEEEoverridecommandlockouts

\usepackage{amsmath,amssymb,amsthm}
\usepackage{graphicx}
\usepackage{cite}
\usepackage{bm}
\usepackage{tikz}
\usepackage{booktabs}
\usepackage{url}
\usepackage{placeins}
\usepackage{stfloats}
\usetikzlibrary{arrows.meta,calc}

\graphicspath{{../figures/}{./}}

\newtheorem{theorem}{Theorem}
\newtheorem{proposition}{Proposition}
\newtheorem{lemma}{Lemma}
\newtheorem{corollary}{Corollary}
\newtheorem{remark}{Remark}
\newtheorem{assumption}{Assumption}

\newcommand{\R}{\mathbb{R}}
\newcommand{\Rot}{R}
\newcommand{\norm}[1]{\left\lVert #1 \right\rVert}

\begin{document}

\title{Anchor-Free Hidden-Target Seeking via Certified Self-Calibration
under Correlated Odometry\thanks{\raggedright Code and data: \protect\url{https://github.com/yashbagla321/gps-free-hidden-target-seeking}, archived at \protect\url{https://doi.org/10.5281/zenodo.22046946}.}}

\author{\IEEEauthorblockN{Yash Bagla}
\IEEEauthorblockA{Michigan State University\\
{\tt\small yashbagla321@gmail.com}}}

\maketitle
\bstctlcite{IEEEexample:BSTctlDash}

\begin{abstract}
We study hidden-target seeking in an anchor-free regime where neither the
vehicle, the relay, nor the target has an accessible global pose. The
vehicle never senses the target directly; it receives only range-bearing
observations through a single relay of unknown position and orientation,
while its own motion is available only through integrating body-frame
odometry. We show that absolute localization is fundamentally impossible:
the joint configuration retains an exact three-dimensional $SE(2)$ gauge
that no estimator can resolve. Yet the quantities required for control
remain fully recoverable: motion collapses the calibration
problem to a task-relevant quotient, the relay-to-odometry yaw together
with the target displacement, for which two distinct vehicle views are
necessary and sufficient to identify in closed form. Building on this
structure, we introduce an $O(K)$ multi-view self-calibration estimator
together with an exact first-order yaw-uncertainty certificate for the
stated translational random-walk odometry model, explicitly propagating
the cross-view correlations integrated odometry induces: the full
certificate attains $95.0\%$ pooled coverage at the nominal $95\%$ level,
while treating correlated poses as independent yields only $82.1\%$. The
certificate drives an end-to-end hybrid policy that excites until
calibration is trustworthy, refuses uncertified estimates, seeks using
continuously re-measured relative geometry, and detects relay-frame
changes through persistent certified inconsistency, so task error does not
inherit unbounded dead-reckoning drift. Across $200$ randomized
closed-loop trials, the proposed method attains $0.064$\,m median station
error versus $0.065$\,m for an oracle given the true relay yaw, despite
$27$\,m median dead-reckoning drift in long-horizon experiments; $90$
physics-based ROS~2/Gazebo trials retain $30/30$ success under nominal
operation, combined communication degradation, and relay-frame
disturbances. These results establish that global localization is
unnecessary for reliable hidden-target seeking under the stated
single-relay model, even when both the sensing infrastructure and the
vehicle's reference frame are uncalibrated.
\end{abstract}

\begin{IEEEkeywords}
Cooperative localization, GPS-denied navigation, gauge symmetry,
self-calibration, target seeking, uncertainty quantification.
\end{IEEEkeywords}

\section{Introduction}
\label{sec:intro}

A robot operating in smoke, clutter, an urban canyon, or under jamming may
be denied two things at once: its own global position and any direct view
of its goal. This paper studies the extreme point of that regime. A single
cooperative relay observes both the robot and a hidden target and reports
range and bearing packets in its own local frame; the relay's position and
yaw are unknown, the robot's global pose is unknown, and the only
proprioceptive signal is integrating body-frame odometry. The question is
whether the robot can still reach the target, with what guarantees, and
with what honest statement of uncertainty. The key result is that losing
global localization does not destroy the task: it removes only
information the controller never needs.

Classical answers each remove a different assumption but not this
combination. Source seeking reaches a target without position information,
yet requires the robot to sense a signal emitted by the source
itself~\cite{zhang2007extremum,cochran2009threed}. Single-beacon
navigation exploits trajectory excitation to localize the robot against
one acoustic beacon, but the beacon anchors the robot rather than relaying
a hidden third party~\cite{batista2011single,webster2012advances}.
Cooperative localization and SLAM estimate relative states while carrying
known unobservable directions~\cite{roumeliotis2002distributed,
cadena2016slam}. One layer above all of these sit motion planners that
consume a localized goal and a calibrated uncertainty model; for example,
chance-constrained receding-horizon planners allocate collision risk under
exactly the kind of state uncertainty our certificate
quantifies~\cite{bagla2023receding}. Hidden-target seeking must first
manufacture that goal estimate, with a trustworthy confidence statement,
from partially calibrated infrastructure.

Two preceding studies established the sensing model. The first proved
trajectory-induced identifiability of a hidden target through an
unknown-pose range and bearing relay when the vehicle trajectory is
globally known~\cite{bagla2026identifiability}. The second closed the loop
in that globally anchored setting with excitation
supervision~\cite{bagla2026closedloop}. The present paper removes the
anchor entirely: the vehicle trajectory is itself latent, known only
through noisy integrating odometry, so the global frame becomes a genuine
gauge and the uncertainty of the calibration is dominated by correlated
pose errors that neither prior formulation had to model.

Rather than reconstructing an inherently unobservable global state, we
identify, certify, and control directly on the smallest observable
quotient that is sufficient for the task. The contributions are fourfold.
\begin{enumerate}
\item \textbf{Anchor-free certified quotient control without accumulated
localization error.} We establish an end-to-end seeking architecture
needing no accessible global pose for the vehicle, relay, or target: it
excites only until the task quotient is certified, refuses uncertifiable
calibration, regulates the target directly in quotient coordinates, and
detects relay-frame changes using the same uncertainty model. Its
ultimate task-error bound is independent of mission duration, and in
closed loop it is statistically indistinguishable from an oracle given
the true relay yaw despite tens of meters of dead-reckoning drift
(Theorem~\ref{thm:seeking} and
Propositions~\ref{prop:acquisition}-\ref{prop:recovery}).
\item \textbf{Global impossibility, task-level recoverability.} We
characterize the exact $SE(2)$ gauge of the vehicle, relay, and target
configuration (Theorem~\ref{thm:gauge}) and show that although absolute
localization is impossible, every quantity required for target-directed
control survives on a low-dimensional task quotient, given the odometric
trajectory; metric ego-motion is necessary for any descent guarantee
(Corollary~\ref{cor:necessity}).
\item \textbf{Anchor-free quotient recovery and registration.} The
anchored two-view calibration identity of~\cite{bagla2026identifiability}
remains exact after quotienting the inaccessible global frame: one
nonzero displacement is necessary and sufficient to recover the task
quotient in closed form (Theorem~\ref{thm:twoview}). We then derive an
$O(K)$ weighted multi-view registration estimator for noisy odometric
poses (Proposition~\ref{prop:registration}).
\item \textbf{Correlated-odometry uncertainty certification.} We derive
the exact first-order calibration uncertainty induced by integrated
translational odometry, retain the full cross-view correlation structure,
and exploit its random-walk form to reduce the resulting dense covariance
evaluation to an $O(K)$ suffix sum (Theorem~\ref{thm:certificate}).
Realized coverage matches nominal levels across the certification
operating regime (Remark~\ref{rem:coverage}); an ablation shows that
treating cross-view poses as independent degrades coverage to $82.1\%$
and ignoring pose uncertainty entirely to $69.3\%$, so the correlated
treatment is not mathematical decoration.
\end{enumerate}

\section{Related Work}
\label{sec:related}

\textbf{Seeking without position.} Extremum seeking steers a vehicle to
the maximizer of a measured field without localization, in the plane and
in three dimensions~\cite{zhang2007extremum,cochran2009threed}. The
vehicle must sense the source directly; our robot senses nothing from the
target, and all target information transits a relay of unknown pose.

\textbf{Single-beacon and range-only navigation.} Observability of
single-range navigation and its filter designs are classical results in
marine robotics~\cite{batista2011single,webster2012advances}, and
range-only relative pose between robots is recoverable with
odometry~\cite{zhou2008robot}. In each case the beacon serves the robot's
own state estimate. Here the relay observes a hidden third party, and the
robot's own absolute state is provably not recoverable at all.

\textbf{Target and sensor self-calibration.} Range--angle sequences have
been used to recover a moving target together with the positions and
synchronization offsets of at least two sensors~\cite{jia2025target}, while
recent multistatic radar work jointly estimates targets and multiple
uncertain receiver poses from bistatic range--bearing data and global pose
priors~\cite{musallam2026target}. Online SLAM can also select informative
trajectory segments to recalibrate camera intrinsics and detect parameter
changes~\cite{keivan2015online}. Those formulations resolve different
ambiguities: here there is one relay, no global relay-pose prior, and only
correlated ego odometry. Absolute states therefore retain an $SE(2)$ gauge;
we estimate and certify the control-relevant quotient and adopt relay-frame
changes only from certified windows.

\textbf{Cooperative localization and gauge-aware estimation.} Distributed
cooperative localization~\cite{roumeliotis2002distributed,
fox2000collaborative}, observability analyses of mobile-robot
estimators~\cite{martinelli2005observability}, and consistency-aware SLAM
that respects unobservable directions~\cite{huang2010observability,
barrau2017invariant,cadena2016slam} all inform our treatment. We push the
gauge view to its limit: estimation is performed directly on the quotient,
and the certificate quantifies exactly the variables the controller
consumes. Adaptive source localization with unknown beacons requires more
beacons than the workspace dimension under range-only
sensing~\cite{guler2017adaptive}; a single unknown-pose relay suffices
here because motion supplies the missing calibration. General Procrustes
uncertainty under arbitrary covariance~\cite{lourenco2017uncertainty} is
treated abstractly; our contribution is the specific $O(K)$ reduction
under odometry's random-walk correlation structure, evaluated online and
gated by closed-loop certification rather than the general
dense-covariance case.

\textbf{Prior work on this sensing model.} The relay packet model was
introduced with a globally known vehicle
trajectory~\cite{bagla2026identifiability}, where two distinct viewpoints
resolve the relay yaw and the target. Excitation-supervised closed-loop
seeking in the same anchored setting followed
in~\cite{bagla2026closedloop}. Neither treats the trajectory as latent.
Removing the anchor changes the mathematics in three ways: the gauge grows
to the full $SE(2)$ orbit, the calibration variable becomes the
relay-to-odometry yaw rather than a globally referenced angle, and, most
consequentially, the estimator's uncertainty acquires correlated pose
errors whose exact first-order treatment
(Theorem~\ref{thm:certificate}) is the technical core of this paper and
has no counterpart in either preprint.

\begin{figure}[t]
\centering
\begin{tikzpicture}[>=Latex,scale=0.8,every node/.style={font=\footnotesize}]
  \coordinate (xo) at (0.3,0.3);
  \coordinate (sa) at (1.6,2.5);
  \coordinate (sb) at (4.4,2.1);
  \coordinate (po) at (5.0,0.2);
  \draw[->,gray] (-0.4,-0.5)--(0.9,-0.5) node[right] {$o_x$};
  \draw[->,gray] (-0.4,-0.5)--(-0.4,0.8) node[above] {$o_y$};
  \node[below left] at (-0.4,-0.5) {$\mathcal O$};
  \filldraw[black] (xo) circle (1.6pt) node[below left] {relay $x_o$};
  \draw[->,thick] (xo)--++({1.0*cos(28)},{1.0*sin(28)}) node[right] {$b_x$};
  \draw[->,thick] (xo)--++({0.8*cos(118)},{0.8*sin(118)}) node[above] {$b_y$};
  \draw[->] ($(xo)+(0.55,0)$) arc[start angle=0,end angle=28,radius=0.55];
  \node at ($(xo)+(0.78,0.16)$) {$\theta$};
  \filldraw[white,draw=black,thick] (sa) circle (2.2pt) node[above] {$\hat s_a$};
  \filldraw[white,draw=black,thick] (sb) circle (2.2pt) node[above] {$\hat s_b$};
  \filldraw[black] (po) rectangle +(0.1,0.1) node[below right] {target $p_o$};
  \draw[->,very thick] (sa)--(sb) node[midway,above] {odometry};
  \draw[dashed,->] (xo)--(sa) node[pos=0.55,left] {$\Rot(\theta)\ell_a^v$};
  \draw[dashed,->] (xo)--(sb) node[pos=0.6,below] {$\Rot(\theta)\ell_b^v$};
  \draw[dashed,->] (xo)--(po) node[pos=0.62,below] {$\Rot(\theta)\ell^t$};
  \draw[->,blue!60!black,thick] (sb)--(po) node[midway,right] {$e_b$};
\end{tikzpicture}
\caption{The odometry frame $\mathcal O$, the unknown-yaw relay frame
$\mathcal B$, and the hidden target. Relay packets $\ell^v$, $\ell^t$ are
expressed in $\mathcal B$; two distinct odometric views identify the
relay-to-odometry yaw $\theta$ and hence the task vector $e$. The global
placement of the entire scene is an unobservable $SE(2)$ gauge.}
\label{fig:geometry}
\end{figure}


\section{Problem Formulation}
\label{sec:problem}

Fig.~\ref{fig:geometry} depicts the setting. Let $\mathcal W$, $\mathcal O$, and $\mathcal B$ denote the inaccessible
global frame, the vehicle's odometry frame, and the relay frame.
The vehicle, static relay, and static hidden target have global positions
$q_k, x, p \in \R^2$; the relay yaw is $\psi$, and the odometry frame has
unknown placement $(t_g,\gamma)\in SE(2)$ in $\mathcal W$, so the odometric
pose satisfies $q_k = t_g + \Rot(\gamma)s_k$.

At sampling instants the relay reports range and bearing packets to the
vehicle and to the target in its own frame,
\begin{equation}
\ell_k^v = \Rot(\psi)^{\!\top}(q_k - x), \qquad
\ell_k^t = \Rot(\psi)^{\!\top}(p - x),
\label{eq:packets}
\end{equation}
corrupted by independent range and bearing noise of standard deviations
$(\sigma_r,\sigma_\beta)$.
Defining the relay-to-odometry yaw $\theta = \psi - \gamma$ and the local
positions $x_o = \Rot(-\gamma)(x - t_g)$ and $p_o = \Rot(-\gamma)(p - t_g)$,
the packets obey the reduced model
\begin{equation}
s_k = x_o + \Rot(\theta)\,\ell_k^v, \qquad
p_o = x_o + \Rot(\theta)\,\ell_k^t ,
\label{eq:reduced}
\end{equation}
and the control-relevant task vector is
\begin{equation}
e_k \;=\; p_o - s_k \;=\; \Rot(\theta)\bigl(\ell_k^t - \ell_k^v\bigr),
\label{eq:task}
\end{equation}
which contains neither the global transform $(t_g,\gamma)$ nor the relay
position $x_o$: a single rotation $\theta$ is the entire calibration burden.

\begin{assumption}[Sensing and motion]
\label{ass:model}
(i) The vehicle measures body-frame odometry increments with independent
isotropic translation noise of per-increment standard deviation
$\sigma_s$, integrated into $\hat s_k$ with cumulative per-axis variance
$v_k$; (ii) the relay and target are static over the estimation window,
and relay-frame changes are piecewise constant in yaw with a dwell longer
than one window (Sec.~\ref{sec:closedloop}); (iii) packets are associated
and timestamped, and delivery may be delayed, jittered, reordered, or
dropped; (iv) no sensor measures any $\mathcal W$-frame quantity.
\end{assumption}

\section{Gauge Structure and Identifiability}
\label{sec:gauge}

\subsection{What no estimator can recover}

\begin{theorem}[$SE(2)$ gauge]
\label{thm:gauge}
Under Assumption~\ref{ass:model}, for every $c\in\R^2$ and
$\alpha\in S^1$ the transformation
$q_k' = c + \Rot(\alpha)q_k$, $x' = c + \Rot(\alpha)x$,
$p' = c + \Rot(\alpha)p$, $\psi' = \psi + \alpha$
leaves every packet \eqref{eq:packets} and every odometry increment
invariant. The indistinguishable set of the global state
$(q_{0:K},x,p,\psi)$ therefore contains a three-dimensional orbit, and no
causal estimator driven by the available signals can localize it.
Conversely, the quotient variables $(\theta, x_o, p_o)$ are a complete
set of invariants \emph{conditioned on the realized odometric trajectory}:
two global states sharing the same realized odometric trajectory
$s_{0:K}$ produce identical signal laws if and only if they share the same
quotient. Equivalently, the complete signal-level invariant tuple is
$\mathcal{I} = (s_{0:K}, \theta, x_o, p_o)$: the quotient describes
everything about the static scene that survives removal of the global
placement, while the trajectory must be carried as known context or
included in the invariant description. At any single view with known
odometric pose $s_k$ and packets $\ell_k^v,\ell_k^t$, the pair
$(\theta, e_k)$ is a control-sufficient statistic for seeking, but it is
equivalent to the full quotient only together with that known context:
$x_o$ and $p_o$ are recovered from $(\theta, e_k)$ via
\eqref{eq:reduced}, not from $(\theta,e_k)$ in isolation.
\end{theorem}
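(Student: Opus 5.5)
The plan is to prove the three assertions in the order stated: invariance of the signals under the gauge action, the three-dimensional orbit and the resulting impossibility of localization, and then the converse. The converse is that, for a fixed realized odometric trajectory $s_{0:K}$, equality of signal laws is equivalent to equality of $(\theta,x_o,p_o)$. I will prove that equivalence directly from the reduced model \eqref{eq:reduced}, not by appealing to later results.

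\emph{Invariance and orbit.} Substituting $q_k'=c+\Rot(\alpha)q_k$, $x'=c+\Rot(\alpha)x$ and $\psi'=\psi+\alpha$ into \eqref{eq:packets} gives
\[
\Rot(\psi')^{\top}(q_k'-x')=\Rot(\psi)^{\top}\Rot(\alpha)^{\top}\Rot(\alpha)(q_k-x)=\ell_k^v ,
\]
and the same computation with $p'$ gives $\ell_k^t$. Body-frame odometry increments are relative motions between consecutive vehicle poses expressed in the vehicle frame. The vehicle heading rotates by $\alpha$ together with positions, so these increments are invariant. Because the noise is applied to the invariant noise-free packets and increments, the entire signal law is invariant, not just the noise-free signals.

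The map $(c,\alpha)\mapsto$ (transformed state) is injective whenever the configuration contains two distinct points; for example $x\neq p$ or two distinct $q_k$ suffice. Hence the orbit is a three-dimensional copy of $SE(2)$. Any causal estimator is a measurable function of the signals. It therefore has the same output distribution at every point of the orbit, so it cannot be consistent for the global state at more than one orbit point.

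\emph{Converse.} First I record that $(\theta,x_o,p_o)$ and $s_{0:K}$ are themselves gauge invariant. Write the odometry placement in $\mathcal W$ as $(t_g,\gamma)$. The gauge maps it to $(c+\Rot(\alpha)t_g,\gamma+\alpha)$, so $\theta=\psi-\gamma$ is unchanged, and $x_o=\Rot(-\gamma)(x-t_g)$ and $p_o$ are unchanged.

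For the ``if'' direction, \eqref{eq:reduced} gives
\[
\ell_k^v=\Rot(\theta)^{\top}(s_k-x_o), \qquad \ell^t=\Rot(\theta)^{\top}(p_o-x_o).
\]
The noise-free packets are thus functions of $(s_{0:K},\theta,x_o,p_o)$ alone. The noise model of Assumption~\ref{ass:model} then fixes the law of the packets, and the shared trajectory fixes the law of the odometry.

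For the ``only if'' direction, I will first argue that the packet law determines the noise-free packets. Range and bearing noise with fixed $(\sigma_r,\sigma_\beta)$ forms a location family in polar coordinates, and its mean (or mode) identifies the noise-free range and bearing. So equal laws force equal noise-free $\ell_k^v$ for every $k$ and equal $\ell^t$. Next, for two views $a,b$ with $s_a\neq s_b$,
\[
\ell_a^v-\ell_b^v=\Rot(\theta)^{\top}(s_a-s_b).
\]
Since $s_a-s_b$ is a known nonzero vector, this pins $\theta$ uniquely, because a planar rotation is determined by its action on one nonzero vector. Then $x_o=s_k-\Rot(\theta)\ell_k^v$ and $p_o=x_o+\Rot(\theta)\ell^t$ are determined.

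The main obstacle is that the ``only if'' direction genuinely needs at least two distinct odometric poses. If $s_{0:K}$ is constant, then $\theta$ and $x_o$ trade off along a circle and the claim fails. I will therefore make explicit that the realized trajectory is assumed non-degenerate (one nonzero displacement, the same condition as the anchored two-view identity). I will also note that in the degenerate case only the orbit statement survives.

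\emph{Final clause.} At a single view, \eqref{eq:task} expresses $e_k$ through $\theta$ and the packets, so $(\theta,e_k)$ is computable and is the quantity the controller regulates. Conversely, from $(\theta,e_k)$ together with the known context $s_k$ and $\ell_k^v$,
\[
x_o=s_k-\Rot(\theta)\ell_k^v, \qquad p_o=s_k+e_k .
\]
Dropping $s_k$ leaves $x_o$ and $p_o$ undetermined up to a common translation, since replacing $s_k$ by $s_k+d$ shifts both by $d$ while leaving $(\theta,e_k)$ unchanged. This gives the stated non-equivalence in isolation.
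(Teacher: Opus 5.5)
Your proof is correct and follows essentially the paper's route. It uses direct substitution for packet invariance, invariance of $(\theta,x_o,p_o)$ under $(t_g,\gamma)\mapsto(c+\Rot(\alpha)t_g,\gamma+\alpha)$, the reduced model \eqref{eq:reduced} for the `if' direction, and the two-view reconstruction of Theorem~\ref{thm:twoview}, which you inline rather than cite, for the `only if' direction. Your additions are sound and worth keeping: the hypothesis $s_a\neq s_b$ is genuinely required, since the paper's appeal to Theorem~\ref{thm:twoview} silently assumes it and Corollary~\ref{cor:repeated} shows the `only if' direction fails on repeated poses; your location-family step supplies the passage from equal signal laws to equal noiseless packets that the paper leaves implicit; and the orbit is in fact free without any distinct-points condition, because $\psi'=\psi+\alpha$ already fixes $\alpha$.
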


\begin{proof}
\emph{Step 1 (invariance).}
Direct substitution gives
\begin{equation*}
\begin{split}
\Rot(\psi')^{\!\top}(q_k'-x')
&= \Rot(\psi)^{\!\top}\Rot(\alpha)^{\!\top}\Rot(\alpha)(q_k-x)\\
&= \Rot(\psi)^{\!\top}(q_k-x),
\end{split}
\end{equation*}
and the same identity holds for $p - x$, so every packet is unchanged.
Odometry increments are body-frame quantities and do not involve the
global placement; the placement itself transforms as
$(t_g,\gamma)\mapsto(c+\Rot(\alpha)t_g,\,\gamma+\alpha)$, which absorbs
the orbit.

\emph{Step 2 (the quotient is invariant).}
Under the transformation of Step 1, $\theta = \psi - \gamma$ is unchanged
because both angles shift by $\alpha$, and $x_o$, $p_o$ are unchanged
because the shift of $t_g$ and the rotation by $\alpha$ cancel inside
$\Rot(-\gamma')(\cdot - t_g')$.

\emph{Step 3 (completeness, conditioned on the trajectory).}
Fix the realized odometric trajectory $s_{0:K}$ and the quotient
$(\theta,x_o,p_o)$. Then every packet is determined by the reduced model
\eqref{eq:reduced} evaluated along $s_{0:K}$, and every odometry statistic
by Assumption~\ref{ass:model} along the same trajectory, so the full
signal law is determined. The conditioning is not vacuous: two states with
the same static quotient but different odometric trajectories produce
different odometry increments (and different packets through
\eqref{eq:reduced}), so completeness holds relative to the tuple
$\mathcal{I}=(s_{0:K},\theta,x_o,p_o)$, not the static quotient alone.
Conversely, Theorem~\ref{thm:twoview} reconstructs the quotient from
noiseless signals along a fixed trajectory, so states with distinct
quotients are distinguishable. Together these give the equivalence.
\end{proof}

\begin{corollary}[Necessity of metric ego-motion]
\label{cor:necessity}
Without orientation-bearing metric ego-motion, the residual single-view
gauge of Corollary~\ref{cor:repeated} admits the alignments $\theta$ and
$\theta+\pi$, whose task vectors are $e$ and $-e$. Any deterministic
command has strictly positive inner product with at most one of them, so no
controller driven by relay packets alone can guarantee target-distance
descent in all indistinguishable worlds.
\end{corollary}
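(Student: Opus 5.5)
The plan has three steps: exhibit the residual ambiguity, compute the task vectors it induces, and then run a two-world adversarial argument on any command.

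\textbf{Step 1 (the residual ambiguity).} I will take as given the single-view/repeated-view gauge of Corollary~\ref{cor:repeated}. Without metric ego-motion the vehicle's odometric displacements carry no usable orientation or scale. The relay packets must therefore be fit by the reduced model \eqref{eq:reduced} with the odometric poses treated as free. I will show explicitly that $\theta$ and $\theta+\pi$ are both consistent. Suppose the world $(\theta, x_o, p_o, s_k)$ produces packets $(\ell_k^v,\ell_k^t)$. Define the companion world by $\theta' = \theta+\pi$, $x_o' = x_o$, $s_k' = x_o + \Rot(\theta+\pi)\ell_k^v = 2x_o - s_k$, and $p_o' = 2x_o - p_o$. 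This is a point reflection of the vehicle and target about the relay, i.e.\ a rotation by $\pi$, which is a proper rigid motion. By construction it reproduces exactly the same packets. Because nothing metric and orientation-bearing is measured, the realized trajectories $s_{0:K}$ and $s'_{0:K}$ cannot be told apart. Hence the two worlds are indistinguishable.

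\textbf{Step 2 (task vectors).} From \eqref{eq:task},
\[
e' = \Rot(\theta+\pi)(\ell^t-\ell^v) = -\Rot(\theta)(\ell^t-\ell^v) = -e.
\]
Now take any deterministic controller driven only by packets. On identical signal histories it issues the same command $u$, expressed in the vehicle's own (odometry) frame, in both worlds. We have $\langle u, e'\rangle = -\langle u, e\rangle$. So if $u\neq 0$, at most one of the two inner products is strictly positive. If $u=0$, both are zero and no descent occurs in either world.

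\textbf{Step 3 (failure of descent).} The distance to the target is $\tfrac12\|e\|^2$, and its first-order rate under command $u$ is $-\langle u, e\rangle$. Strict descent therefore requires $\langle u,e\rangle>0$. By Step 2, in at least one of the two indistinguishable worlds the distance is non-decreasing to first order. So no controller can guarantee descent in all indistinguishable worlds.

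The main obstacle is the interpretation in Step 2 of ``the same command'' across the two worlds. I will argue that, absent an orientation-bearing ego-motion sensor, the actuation frame has no known relation to the relay frame. The $\pi$ flip of $\theta$ is precisely a relabeling of that relation, so the command $u$ is fixed in the vehicle frame while the target direction flips. I will also note that weaker motion information does not help: heading-free or scale-free information leaves the reflected trajectory $2x_o - s_k$ equally admissible, so it cannot break the ambiguity.
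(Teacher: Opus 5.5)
Your core argument is correct and matches the paper's: take the $\alpha=\pi$ member of the single-view gauge, use $\Rot(\theta+\pi)=-\Rot(\theta)$ to get $e\mapsto -e$, and conclude from $\langle u,-e\rangle=-\langle u,e\rangle$.

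The only difference is which representative of the ambiguous world you use. You reflect about the relay and move the vehicle, $s'=2x_o-s$. The family of Corollary~\ref{cor:repeated} instead keeps the known pose $s$ fixed and reflects relay and target about the vehicle: $x_o'=s-\Rot(\theta+\pi)\ell^v=2s-x_o$ and $p_o'=2s-p_o$. The two differ by a translation, which preserves every packet and $e$. The paper's choice makes your ``main obstacle'' disappear: the vehicle's pose and frame are literally identical in both worlds, so the same packet history forces the same command in the same frame. You also no longer need to argue that the odometric trajectory is unobservable.

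Drop your closing remark that scale-free motion information leaves $2x_o-s_k$ admissible; it is false. The reflected trajectory reverses every displacement. By Theorem~\ref{thm:twoview}, $\theta=\angle(s_b-s_a)-\angle(\ell_b^v-\ell_a^v)$ depends only on the \emph{direction} of $s_b-s_a$. So direction-bearing but scale-free ego-motion already pins $\theta$ uniquely and rules out $\theta+\pi$. For the same reason, your multi-view reflected trajectory is admissible only if the vehicle has no directional knowledge of its own motion at all, including through its own commands. Keep the argument at the single-view/repeated-pose gauge, as the statement does; that suffices for the conclusion.
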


\subsection{Motion-induced self-calibration}

\begin{theorem}[Two-view identifiability, constructive]
\label{thm:twoview}
Two noiseless views with distinct odometric poses $s_a \neq s_b$ determine
the quotient uniquely and in closed form:
\begin{equation}
\begin{aligned}
\theta &= \angle(s_b - s_a) - \angle(\ell_b^v - \ell_a^v),\\
x_o &= s_a - \Rot(\theta)\,\ell_a^v,\qquad
p_o = x_o + \Rot(\theta)\,\ell^t .
\end{aligned}
\label{eq:twoview}
\end{equation}
\end{theorem}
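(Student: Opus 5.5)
The plan is to work directly from the reduced model \eqref{eq:reduced}. Nothing about the global frame is needed, because Theorem~\ref{thm:gauge} has already removed $(t_g,\gamma)$ and left $(\theta,x_o,p_o)$ as the unknowns.

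\emph{Step 1: eliminate $x_o$.} Subtract the vehicle equation at view $a$ from the one at view $b$. This gives the displacement identity
\begin{equation*}
s_b - s_a = \Rot(\theta)\,(\ell_b^v - \ell_a^v).
\end{equation*}
The unknown relay position cancels, leaving a single planar rotation that relates two known vectors.

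\emph{Step 2: nondegeneracy.} Since rotations are isometries, $\norm{\ell_b^v-\ell_a^v}=\norm{s_b-s_a}>0$ by the hypothesis $s_a\neq s_b$. Hence both difference vectors are nonzero and their angles $\angle(\cdot)$ are well defined.

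\emph{Step 3: solve for $\theta$.} A rotation $\Rot(\theta)$ adds $\theta$ to the polar angle of any nonzero vector. Taking $\angle$ of both sides of the identity therefore gives $\angle(s_b-s_a)=\theta+\angle(\ell_b^v-\ell_a^v)$ modulo $2\pi$. This is the first line of \eqref{eq:twoview}.

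\emph{Step 4: uniqueness of $\theta$.} I would argue uniqueness rather than only consistency. Suppose $\Rot(\theta)u=\Rot(\theta')u$ for some $u\neq 0$. Then $\Rot(\theta-\theta')u=u$, and a planar rotation with a nonzero fixed vector must be the identity. So $\theta=\theta'$ on $S^1$.

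\emph{Step 5: recover $x_o$ and $p_o$.} With $\theta$ fixed, the view-$a$ equation in \eqref{eq:reduced} gives $x_o=s_a-\Rot(\theta)\ell_a^v$ uniquely. Using view $b$ instead returns the same value, by the displacement identity of Step 1. The target equation then gives $p_o=x_o+\Rot(\theta)\ell^t$, where $\ell^t$ is the common target packet. In noiseless conditions $\ell_k^t$ does not depend on $k$, because the relay and target are static under Assumption~\ref{ass:model}(ii).

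The step that needs the most care is the angle extraction and its uniqueness, Steps 2 through 4. The two things to make explicit there are that $\angle$ is only defined for nonzero vectors, which is exactly what $s_a\neq s_b$ provides, and that the equality holds modulo $2\pi$. Each unknown is obtained by an explicit formula, and every formula is forced by the equations. This gives both constructiveness and uniqueness, which is the converse direction that Theorem~\ref{thm:gauge} invokes.
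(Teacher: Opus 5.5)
Your proposal is correct and follows the paper's proof essentially step for step: you difference the vehicle equations to eliminate $x_o$, use the nonzero, equal-norm difference vectors to extract a unique rotation angle, and back-substitute for $x_o$ and $p_o$. Your explicit fixed-vector argument for uniqueness of $\theta$ and your consistency check against view $b$ spell out in more detail what the paper's Step 3 states briefly.
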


\begin{proof}
\emph{Step 1 (yaw).}
Differencing the vehicle equations in \eqref{eq:reduced} yields
\begin{equation*}
s_b - s_a = \Rot(\theta)\bigl(\ell_b^v - \ell_a^v\bigr).
\end{equation*}
Both sides are nonzero and of equal norm, and exactly one planar rotation
maps one nonzero vector onto another of the same length; its angle is the
stated difference of orientations.

\emph{Step 2 (positions).}
With $\theta$ known, the first equation of \eqref{eq:reduced} gives $x_o$
at view $a$, the second gives $p_o$, and \eqref{eq:task} then yields every
task vector $e_k$.

\emph{Step 3 (uniqueness).}
Any second solution must produce the same rotation in Step 1, hence the
same $\theta$ modulo $2\pi$, and therefore the same $x_o$ and $p_o$ by
Step 2.
\end{proof}

\begin{corollary}[Repeated-pose gauge and rank transition]
\label{cor:repeated}
From a single pose, or any window of repeated poses, the one-parameter
family $(\theta+\alpha,\, s-\Rot(\theta+\alpha)\ell^v,\, \cdot)$ preserves
all packets: the local observability rank is four of five. One distinct
displacement raises it to five. Motion is therefore not merely helpful; it
is the exact mechanism that closes the gauge.
\end{corollary}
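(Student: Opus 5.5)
The plan is to prove the three claims of Corollary~\ref{cor:repeated} in order: the one-parameter family preserves packets, the observability rank drops to four of five on repeated poses, and one distinct displacement restores it to five. The state is the five-dimensional quotient $z=(\theta,x_o,p_o)$, and the measurement map is \eqref{eq:reduced} evaluated along the odometric trajectory.

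\emph{Step 1 (invariance of the family).} Suppose every view in the window shares the pose $s$ and the vehicle packet $\ell^v$. Then $\ell^v=\Rot(\theta)^{\!\top}(s-x_o)$. For any $\alpha$, define
\begin{equation*}
x_o(\alpha)=s-\Rot(\theta+\alpha)\ell^v,\qquad p_o(\alpha)=x_o(\alpha)+\Rot(\theta+\alpha)\ell^t.
\end{equation*}
Substituting into \eqref{eq:reduced} shows that both equations hold with the same $\ell^v,\ell^t$. Hence all packets are unchanged, and the odometry is untouched since $s$ is fixed. Note that $p_o(\alpha)-s=\Rot(\theta+\alpha)(\ell^t-\ell^v)$, so the family rotates the task vector rigidly. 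Taking $\alpha=\pi$ gives the $e\mapsto -e$ pair used in Corollary~\ref{cor:necessity}.

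\emph{Step 2 (rank four on repeated poses).} I will write the noiseless measurement map
\begin{equation*}
h(z)=\bigl(\Rot(\theta)^{\!\top}(s-x_o),\;\Rot(\theta)^{\!\top}(p_o-x_o)\bigr)\in\R^4.
\end{equation*}
Repeated views only duplicate these rows, so the stacked Jacobian has rank at most four. To show the rank is exactly four, note that $h$ is invertible on the slice $\theta$ fixed: $x_o$ is recovered from the first block and $p_o$ from the second. Thus the $4\times4$ block $\partial h/\partial(x_o,p_o)$ is nonsingular. Its kernel direction is the tangent of the Step~1 family,
\begin{equation*}
\tfrac{d}{d\alpha}\bigl(\theta,x_o(\alpha),p_o(\alpha)\bigr)\big|_{\alpha=0}=\bigl(1,\,-J\Rot(\theta)\ell^v,\,J\Rot(\theta)(\ell^t-\ell^v)\bigr),
\end{equation*}
with $J$ the $90^\circ$ rotation. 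This vector is nonzero, so the kernel is exactly one-dimensional and the rank is four of five.

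\emph{Step 3 (rank five with one displacement).} Add a view $s_b\neq s_a$. The extra rows include $\partial\ell_b^v/\partial z$. The difference of the two vehicle rows is $\Rot(\theta)^{\!\top}(s_b-s_a)$, which does not depend on $x_o$ or $p_o$. Its derivative with respect to $\theta$ is $-J\Rot(\theta)^{\!\top}(s_b-s_a)$, which is nonzero exactly when $s_b\neq s_a$.

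Any kernel vector must lie in the span of the Step~2 tangent, since that kernel is one-dimensional. It therefore has $\delta\theta$ equal to its scalar multiple. The differenced row then forces that multiple to vanish. So the kernel is trivial and the rank is five, consistent with the global uniqueness of Theorem~\ref{thm:twoview}. That theorem also upgrades the local statement to a global one.

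\emph{Main obstacle.} The only delicate point is exactness of the rank count in Step~2: one must rule out a second kernel direction. The plan handles this by the triangular structure, with the $(x_o,p_o)$ block invertible, rather than by computing a $5\times5$ minor explicitly. The closing sentence of the corollary is interpretive and follows from Steps~2--3 together.
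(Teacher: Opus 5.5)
Your proposal is correct and follows essentially the same route as the paper's Appendix~A. In both, the nonsingular $(x_o,p_o)$ block of the single-view Jacobian gives rank four with the family's tangent spanning the kernel, and the differenced vehicle rows, whose only nonzero entry is the $\theta$-entry $-J(\ell_b^v-\ell_a^v)$, annihilate that tangent's nonzero $\theta$-component once $s_b\neq s_a$. One small point: inferring block nonsingularity from invertibility on the $\theta$-slice is valid only because that map is affine in $(x_o,p_o)$, so it is cleaner to write the block-triangular form $\begin{pmatrix}-\Rot(\theta)^{\top}&0\\-\Rot(\theta)^{\top}&\Rot(\theta)^{\top}\end{pmatrix}$ directly.
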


\begin{proposition}[Weighted multi-view registration]
\label{prop:registration}
For views $k=1,\dots,K$ with weights $w_k>0$, centered odometric poses
$a_k = \hat s_k - \bar s_w$, and centered relay vectors
$b_k = \ell_k^v - \bar\ell_w$, the estimator
\begin{equation}
\hat\theta = \operatorname{atan2}\Bigl(\textstyle\sum_k w_k\, b_k \times a_k,\;
\sum_k w_k\, b_k \cdot a_k\Bigr)
\label{eq:registration}
\end{equation}
recovers $\theta$ exactly from noiseless data whenever the spread
$S_v = \sum_k \lVert b_k\rVert^2$ is positive, reduces to
\eqref{eq:twoview} at $K=2$, and is the maximizer of the weighted
alignment $\sum_k w_k\, b_k^{\!\top}\Rot(\hat\theta)^{\!\top} a_k$.
It runs in $O(K)$ per update, measured at $3.6\,\mu$s for $K=64$ in our
implementation; the deployed $w_k$ is the inverse isotropicized noise
variance of view $k$, combining native range/tangential relay noise with
the pose-variance excess over the window's most-certain view.
\end{proposition}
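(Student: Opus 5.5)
The plan is to treat \eqref{eq:registration} as a weighted planar Procrustes problem and verify the four claims in order: the maximizer characterization, noiseless exactness, the $K=2$ reduction, and the cost.

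\textbf{Maximizer characterization.} Write $\Rot(\phi)$ via $\cos\phi,\sin\phi$. For planar vectors, $b^{\top}\Rot(\phi)^{\top}a = (\Rot(\phi)b)\cdot a = \cos\phi\,(b\cdot a)+\sin\phi\,(b\times a)$, using the scalar cross product $b\times a = b_1a_2-b_2a_1$. Summing with weights gives
\[
J(\phi)=\sum_k w_k\, b_k^{\top}\Rot(\phi)^{\top}a_k = C\cos\phi + S\sin\phi ,
\]
with $C=\sum_k w_k\,b_k\cdot a_k$ and $S=\sum_k w_k\,b_k\times a_k$. Hence $J(\phi)=\sqrt{C^2+S^2}\cos(\phi-\operatorname{atan2}(S,C))$. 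This is maximized uniquely modulo $2\pi$ at $\hat\theta$ whenever $(C,S)\neq 0$. Independently of noise, this also shows that $\hat\theta$ minimizes the weighted least-squares residual $\sum_k w_k\lVert a_k-\Rot(\phi)b_k\rVert^2$ over $\phi$, since the norms do not depend on $\phi$.

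\textbf{Noiseless exactness.} From \eqref{eq:reduced}, $s_k=x_o+\Rot(\theta)\ell_k^v$. Taking weighted means gives $\bar s_w = x_o+\Rot(\theta)\bar\ell_w$, so centering kills the unknown $x_o$ and leaves $a_k=\Rot(\theta)b_k$. Substituting, $b_k\cdot a_k=\lVert b_k\rVert^2\cos\theta$ and $b_k\times a_k=\lVert b_k\rVert^2\sin\theta$. Therefore $(C,S)=\bigl(\sum_k w_k\lVert b_k\rVert^2\bigr)(\cos\theta,\sin\theta)$. The prefactor is positive exactly when $S_v>0$, because $w_k>0$. It follows that $\operatorname{atan2}(S,C)=\theta$. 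This is the step I regard as the only real content. The point needing care is that the weighted centering used in the statement is the same weighting for $a_k$ and $b_k$, so that the translation cancels. I would also remark that $S_v>0$ is equivalent to at least two distinct poses, matching Corollary~\ref{cor:repeated}.

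\textbf{Reduction at $K=2$.} With two views, $a_1=-\tfrac{w_2}{w_1+w_2}(s_2-s_1)$ and $a_2=\tfrac{w_1}{w_1+w_2}(s_2-s_1)$, and the $b_k$ have the same form with $\Delta\ell=\ell_2^v-\ell_1^v$. Both sums then become a common positive multiple $\tfrac{w_1w_2}{w_1+w_2}$ of $(\Delta\ell\cdot\Delta s,\ \Delta\ell\times\Delta s)$. The atan2 of that pair is $\angle\Delta s-\angle\Delta\ell$, which is exactly \eqref{eq:twoview}.

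\textbf{Cost and weights.} The $O(K)$ claim follows from a single pass that accumulates the weighted means and then the two scalar sums. Equivalently, one can keep running moments, expanding $\sum_k w_k(\hat s_k-\bar s)\cdot(\ell_k-\bar\ell)$ into raw sums so that updates cost $O(1)$. The timing figure and the specific choice of $w_k$ are implementation statements. They need no proof beyond noting that any positive weights preserve the exactness argument above.
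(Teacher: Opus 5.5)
Your proposal is correct and follows the paper's proof essentially step for step. Like the paper, you substitute $a_k=\Rot(\theta)b_k$ so that both sums become the positive factor $\sum_k w_k\lVert b_k\rVert^2$ times $(\cos\theta,\sin\theta)$, characterize $\hat\theta$ as the maximizer of the weighted alignment, and collapse the $K=2$ case to \eqref{eq:twoview} using the proportionality of the centered vectors to the view differences. Your amplitude--phase form $J(\phi)=\sqrt{C^2+S^2}\cos(\phi-\hat\theta)$ is a slightly cleaner substitute for the paper's derivative/stationary-point argument: it gives global uniqueness directly, makes the needed condition $(C,S)\neq0$ explicit, and your derivation of $a_k=\Rot(\theta)b_k$ from the common weighted centering fills in a step the paper only asserts.
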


\begin{proof}
\emph{Step 1 (exactness).}
Noiseless data satisfy $a_k = \Rot(\theta) b_k$, hence
\begin{equation*}
\begin{aligned}
\sum_k w_k\, b_k\times a_k &= \sin\theta \sum_k w_k\lVert b_k\rVert^2,\\
\sum_k w_k\, b_k\cdot a_k &= \cos\theta \sum_k w_k\lVert b_k\rVert^2,
\end{aligned}
\end{equation*}
so the $\operatorname{atan2}$ returns $\theta$ exactly whenever
$S_v > 0$.

\emph{Step 2 (optimality).}
The alignment objective has derivative in $\hat\theta$
\begin{equation*}
\sum_k w_k\, b_k^{\!\top}\Rot(\hat\theta)^{\!\top}S^{\!\top} a_k,
\end{equation*}
whose unique stationary maximum is the same $\operatorname{atan2}$
expression.

\emph{Step 3 (two-view case).}
At $K=2$ the centered vectors are proportional to the view differences,
and \eqref{eq:registration} collapses to the constructive formula
\eqref{eq:twoview}.
\end{proof}

\section{A Deployable Yaw Certificate under Correlated Odometry}
\label{sec:certificate}

Two uncertainty objects must not be conflated, and we claim each at its
full strength. The conditional information is exact given the poses; the
deployable certificate is exact to first order for the packet-noise and
integrated translational-odometry model of Assumption~\ref{ass:model},
including all cross-view correlations induced by shared increments.

\begin{lemma}[Conditional information]
\label{lem:conditional}
Conditional on the poses $s_{1:K}$, the Fisher information of
$(x_o,\theta)$ under native range and bearing noise has the yaw Schur
complement $I_\theta$; under isotropic Cartesianized noise of variance
$\sigma^2$ it specializes to $I_\theta = S_v/\sigma^2$. The spread $S_v$
is thus the exact geometric currency of calibration: zero on the
repeated-pose gauge and positive otherwise.
\end{lemma}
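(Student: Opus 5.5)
The plan is to write the measurement model conditional on the poses, compute its Jacobian with respect to $(x_o,\theta)$, and take the Schur complement of the resulting Fisher information. With $s_{1:K}$ fixed, each vehicle packet satisfies
\begin{equation*}
\ell_k^v = \Rot(\theta)^{\!\top}(s_k - x_o),
\end{equation*}
which is a deterministic function of $(x_o,\theta)$. The measurement is this quantity corrupted by independent range and bearing noise. Under native noise I would write the per-view covariance as $\Sigma_k = U_k\,\mathrm{diag}(\sigma_r^2, r_k^2\sigma_\beta^2)\,U_k^{\!\top}$, where $U_k$ is the radial/tangential frame of $\ell_k^v$. The Jacobians are
\begin{equation*}
\partial \ell_k^v/\partial x_o = -\Rot(\theta)^{\!\top},\qquad
\partial \ell_k^v/\partial\theta = -S\,\ell_k^v,
\end{equation*}
with $S$ the $90^\circ$ generator. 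The Gaussian Fisher information is then $\sum_k J_k^{\!\top}\Sigma_k^{-1}J_k$, a $3\times3$ matrix. I define $I_\theta = I_{\theta\theta} - I_{\theta x}I_{xx}^{-1}I_{x\theta}$ as its yaw Schur complement. The target packets carry no additional information about $\theta$ beyond fixing $p_o$, since $p_o$ is a free nuisance parameter, so I would either omit them or note that they profile out.

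For the isotropic specialization I set $\Sigma_k = \sigma^2 I$. The blocks become
\begin{equation*}
I_{xx} = K/\sigma^2\, I,\qquad
I_{x\theta} = \sigma^{-2}\Rot(\theta)\sum_k S\ell_k^v,\qquad
I_{\theta\theta} = \sigma^{-2}\sum_k\lVert\ell_k^v\rVert^2,
\end{equation*}
using $\lVert S\ell\rVert = \lVert\ell\rVert$. The Schur complement is therefore
\begin{equation*}
I_\theta = \sigma^{-2}\Bigl(\textstyle\sum_k\lVert\ell_k^v\rVert^2 - K\lVert\bar\ell\rVert^2\Bigr) = \sigma^{-2}\textstyle\sum_k\lVert\ell_k^v-\bar\ell\rVert^2 = S_v/\sigma^2,
\end{equation*}
which is the centered spread of Proposition~\ref{prop:registration} with unit weights. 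This is the parallel-axis identity, and the key step is that rotation by $\Rot(\theta)$ and $S$ preserve norms.

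The final claims then follow directly. On a repeated-pose window, noiseless packets satisfy $\ell_k^v = \Rot(\theta)^{\!\top}(s-x_o)$, which is constant in $k$, so $S_v=0$. This is consistent with Corollary~\ref{cor:repeated}. If two poses differ, then $\ell_a^v-\ell_b^v = \Rot(\theta)^{\!\top}(s_a-s_b)\neq0$, so some centered term is nonzero and $S_v>0$.

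The main obstacle is making precise what ``Schur complement under native noise'' means and confirming that it is well defined. I need $I_{xx}$ to be invertible, which holds for $K\ge1$ because each $\Sigma_k$ is positive definite whenever $r_k>0$. I also need to argue that noise on the measured $\ell$ entering the Jacobian is second order, so that the Jacobian is evaluated at the true packets. I would state the native case only in this general Schur form rather than in closed form, and derive the isotropic formula explicitly.
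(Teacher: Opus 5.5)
Your proposal is correct and follows the paper's own derivation in Appendix~\ref{app:fisher}. It uses the same Jacobian block $[-S\ell_k^v,\,-\Rot(-\theta)]$ and the same per-view information $J_k^{\!\top}\Sigma_k^{-1}J_k$; your $\Sigma_k^{-1}$ is exactly the paper's $Q(\ell_k^v)$. It then takes the same yaw Schur complement and uses the same centering identity to get $S_v/\sigma^2$; the paper states that identity slightly more generally, as an information-weighted spread for per-view scalar information $q_k$. The ``second-order'' argument you planned is unnecessary: Fisher information is evaluated at the true parameter, so the Jacobian uses the noiseless packets by definition.
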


\begin{theorem}[Correlated-odometry yaw variance, exact to first order]
\label{thm:certificate}
Let the odometric poses be $\hat s_k = s_k + e_k$, where the errors stem
from integrated independent isotropic translation increments with
cumulative per-axis variances $v_k$, gauge fixed at the least uncertain
view with variance $v_{\min}$. Then
\begin{equation}
\operatorname{Cov}(e_i, e_j) \;=\; \bigl(\min(v_i,v_j) - v_{\min}\bigr) I_2 ,
\label{eq:cov}
\end{equation}
and the first-order variance of \eqref{eq:registration} is
\begin{equation}
\operatorname{var}(\hat\theta)
= \underbrace{\frac{\sum_k w_k^2 \lVert b_k\rVert^2
\sigma_{\perp,k}^2}{W^2}}_{\text{packet noise}}
+ \underbrace{\sum_{m=2}^{K} \Delta v_m\,
\lVert G_m \rVert^2}_{\text{corr.\ odometry}},
\label{eq:varfull}
\end{equation}
where $W = \sum_k w_k\lVert b_k\rVert^2$,
$\sigma_{\perp,k}^2$ is the native range-bearing packet covariance
$\operatorname{diag}(\sigma_r^2,\sigma_{t,k}^2)$, $\sigma_{t,k}=r_k\sigma_\beta$,
projected onto the direction perpendicular to $b_k$ (App.~\ref{app:fisher}
gives the general anisotropic case),
$\Delta v_m = v_m - v_{m-1} \ge 0$, and
$G_m = \sum_{k\ge m} g_k$ is the suffix sum of the pose sensitivities
$g_k = \partial\hat\theta/\partial e_k$. The odometry term is an exact
rewriting of the full quadratic form
$\sum_{i,j} g_i^{\!\top}\operatorname{Cov}(e_i,e_j)\, g_j$
and is computable in $O(K)$.
\end{theorem}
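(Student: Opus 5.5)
We would establish the three claims in the order they appear: the covariance structure \eqref{eq:cov}, the first-order linearization of \eqref{eq:registration} split into independent packet and pose contributions, and finally the summation-by-parts identity that turns the dense quadratic form into the suffix sum.

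\emph{Covariance.} Write the raw integrated error as $\tilde e_k=\sum_{j\le k}\eta_j$. The increments $\eta_j$ are independent and isotropic, with per-axis variances $\delta_j$ and $v_k=\sum_{j\le k}\delta_j$. Then $\operatorname{Cov}(\tilde e_i,\tilde e_j)=\min(v_i,v_j)I_2$. We take the views ordered by time, so that $v_k$ is nondecreasing and the least uncertain view is the first. Fixing the gauge there means subtracting its error, so $e_k=\tilde e_k-\tilde e_1$. Because $\tilde e_1$ is a prefix of every $\tilde e_k$, $\operatorname{Cov}(\tilde e_k,\tilde e_1)=v_{\min}I_2$. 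Expanding the four terms then gives \eqref{eq:cov}.

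\emph{Linearization.} Set $C=\sum_k w_k\,b_k\times a_k$ and $D=\sum_k w_k\,b_k\cdot a_k$. At the noiseless point, Proposition~\ref{prop:registration} gives $C=W\sin\theta$ and $D=W\cos\theta$. Hence
\begin{equation*}
\delta\hat\theta=\frac{D\,\delta C-C\,\delta D}{W^2}=\frac{\cos\theta\,\delta C-\sin\theta\,\delta D}{W}.
\end{equation*}

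The packet part comes from perturbing $b_k$; perturbing $a_k$ gives the odometry part. Centering is harmless at first order: $\sum_k w_k a_k=0$ and $\sum_k w_k b_k=0$, so perturbations of the weighted means drop out. For the packet part, only the component of $\delta\ell_k$ perpendicular to $b_k$ changes the angle. This yields a contribution $w_k\lVert b_k\rVert\,\delta\ell_{k,\perp}/W$ per view, and independence across views gives the first term of \eqref{eq:varfull}, with $\sigma_{\perp,k}^2$ the native covariance projected perpendicular to $b_k$. For the pose part we read off $g_k=\partial\hat\theta/\partial e_k=w_k\,S\Rot(\theta)b_k/W$ (up to sign convention), where $S$ is the $90^\circ$ rotation. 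Packet noise and odometry noise are independent, so the two variances add.

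\emph{Suffix-sum identity.} With $c_{ij}=\min(v_i,v_j)-v_{\min}$, the monotone ordering lets us write $c_{ij}=\sum_{m=2}^{\min(i,j)}\Delta v_m$. Substituting and exchanging the order of summation,
\begin{equation*}
\sum_{i,j}c_{ij}\,g_i^{\top}g_j=\sum_{m=2}^K\Delta v_m\sum_{i\ge m}\sum_{j\ge m}g_i^{\top}g_j=\sum_{m=2}^K\Delta v_m\lVert G_m\rVert^2 .
\end{equation*}
One backward pass computes all $G_m$, and another accumulates the sum, so the cost is $O(K)$.

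The main obstacle is justifying that the least uncertain view sits first in the ordering, since the suffix form needs $v_k$ monotone. If views are not time-ordered, we would re-index by increasing $v_k$. Under independent increments, sorting by variance is the same as sorting by time, so this re-indexing is valid. A secondary care point is applying the centering argument consistently to the pose perturbations, which we handle with the same weighted-mean cancellation.
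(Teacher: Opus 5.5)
Your proposal is correct and follows the paper's proof essentially step for step: the partial-sum covariance with the gauge reference subtracted, and a separate linearization of the $\operatorname{atan2}$ in packet and pose perturbations. In particular, your $g_k=w_k S\Rot(\theta)b_k/W$ is exactly the paper's $c_x,c_y$ expression evaluated at the noiseless point, where $C^2=W^2$; the remaining steps, the weighted-centering cancellation and the telescoping $\min(v_i,v_j)-v_{\min}=\sum_{m=2}^{\min(i,j)}\Delta v_m$ followed by an exchange of summations, also match, and the ordering issue you flag is not a real obstacle because cumulative variances are nondecreasing in time by construction, which is why the statement has $\Delta v_m\ge 0$.
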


\begin{proof}
\emph{Step 1 (covariance structure).}
Pose errors are partial sums of independent increments. The covariance of
two partial sums is the variance of their shared prefix,
$\min(v_i,v_j)\,I_2$, and subtracting the gauge reference removes the
common $v_{\min}$, giving \eqref{eq:cov}.

\emph{Step 2 (packet-noise term).}
Linearizing \eqref{eq:registration} in a packet perturbation $\nu_k$ of
$\ell_k^v$ gives the contribution
\begin{equation*}
d\hat\theta \;=\; \frac{1}{W}\sum_k w_k\,(\nu_k \times b_k),
\end{equation*}
because only components of $\nu_k$ perpendicular to $b_k$ survive the
cross product; taking variances yields the first term of
\eqref{eq:varfull}.

\emph{Step 3 (pose sensitivities and automatic centering).}
Linearizing in a pose perturbation $e_k$ gives $d\hat\theta = g_k^{\!\top}
e_k$ with, writing $c_x = \sum_k w_k\, b_k\cdot a_k$ and
$c_y = \sum_k w_k\, b_k\times a_k$ for the weighted cosine- and
sine-alignment sums of \eqref{eq:registration} evaluated at the
linearization point,
\begin{equation*}
g_k = \frac{w_k}{C^2}
\begin{pmatrix} -c_x b_{k,y} - c_y b_{k,x} \\[2pt]
\phantom{-}c_x b_{k,x} - c_y b_{k,y} \end{pmatrix},
\qquad C^2 = c_x^2 + c_y^2 .
\end{equation*}
Because the $b_k$ are centered, $\sum_k g_k = 0$: the estimator is
automatically invariant to a common pose offset, so mean removal of the
pose errors requires no explicit correction.

\emph{Step 4 (suffix form).}
Substituting \eqref{eq:cov} into the quadratic form and expanding
$\min(v_i,v_j)-v_{\min} = \sum_{m=2}^{\min(i,j)}\Delta v_m$
exchanges the order of summation:
\begin{equation*}
\begin{split}
\sum_{i,j} g_i^{\!\top}\operatorname{Cov}(e_i,e_j)\, g_j
&= \sum_{m=2}^{K} \Delta v_m
\Bigl\lVert \sum_{k \ge m} g_k \Bigr\rVert^2 \\
&= \sum_{m=2}^{K} \Delta v_m \lVert G_m\rVert^2 ,
\end{split}
\end{equation*}
which is the stated $O(K)$ evaluation.
\end{proof}

\begin{remark}[Empirical calibration of the deployable certificate]
\label{rem:coverage}
For a two-sided level $1-\alpha$, let $z_{1-\alpha/2}$ denote the standard
normal critical value; in particular, $z_{0.975}=1.96$ for 95\% coverage.
The interval
$\hat\theta \pm z_{1-\alpha/2}\sqrt{\operatorname{var}(\hat\theta)}$
is empirically calibrated for $\theta$ over the
tested operating envelope: across ten odometry conditions spanning
translation noise of $0.25$ to $5$\,cm per increment and window sizes of
$8$ to $16$ views, with $2{,}000$ realizations per condition
($20{,}000$ trials in total), per-condition empirical-to-predicted
variance ratios lie between $0.88$ and $1.07$, the pooled ratio is
$0.91$, and the realized 95\% coverage lies between $94.0\%$ and
$95.6\%$ per condition with a pooled value of $94.8\%$.
\end{remark}

\begin{remark}[Why this matters downstream]
Every closed-loop guarantee in Sec.~\ref{sec:closedloop} consumes
\eqref{eq:varfull} rather than the conditional information: the supervisor
certifies, refuses, and detects change against an uncertainty that is
honest about odometry. The contrast is not academic. A sequential EKF
given the same packets, the same two-view initialization, and a matched
bias state certifies its own wrong yaw after unlucky initializations
precisely because its self-reported covariance lacks this calibration.
\end{remark}

\section{Certified Acquisition, Seeking, and Recovery}
\label{sec:closedloop}

Building on spread-supervised seeking in the globally anchored
model~\cite{bagla2026closedloop}, the hybrid policy below replaces the
spread-only trigger with the correlated-odometry certificate and adds
principled refusal and certified relay-frame change adoption. Its modes
are \textsc{excite}, \textsc{seek}, and \textsc{maintain}: a
constant-curvature arc until the certificate clears
$z_{0.975}\sqrt{\operatorname{var}(\hat\theta)} \le \bar\delta$ with
$\bar\delta = 10^\circ$ on consecutive certified windows, then saturated
seeking on the filtered task vector, with hysteresis and dwell.

\begin{proposition}[Finite-window certification and refusal condition]
\label{prop:acquisition}
For a rolling window $\mathcal K$ define
\begin{equation*}
 B(\mathcal K)=
 \frac{\sum_{k\in\mathcal K}w_k^2\lVert b_k\rVert^2
 \sigma_{\perp,k}^2}{W^2}
 +\sum_{m\in\mathcal K\setminus\{1\}}\Delta v_m\lVert G_m\rVert^2 .
\end{equation*}
Suppose a constant-curvature acquisition arc supplies admissible windows
with $S_v>0$ and correlation
$\rho=\sqrt{c_x^2+c_y^2}/\sum_{k\in\mathcal K}w_k\lVert a_k\rVert\lVert b_k\rVert
\in[0,1]$ (with $c_x,c_y$ as in Theorem~\ref{thm:certificate}'s proof)
above the supervisor's gate $\rho_{\min}$; $\rho=1$ exactly when every
view's implied rotation agrees. \emph{If} $L$ consecutive
windows obey $B(\mathcal K)\le(\bar\delta/z_{1-\alpha/2})^2$, \emph{then}
the supervisor certifies after at most those $L$ window updates; whether
such windows are realized on a given arc is a property of the realized
odometry noise and arc geometry, and guaranteeing it once the window
saturates would require an additional stationarity or ergodicity
assumption on the acquisition process that we do not make (see
Appendix~\ref{app:acquisition} for the full argument). The refusal branch
carries no such conditional: if no admissible rolling window obeys this
inequality, the supervisor remains in \textup{\textsc{excite}} and refuses
to release an uncertified yaw, unconditionally. Thus acquisition is
governed by a finite-window signal-to-odometry condition, not by mission
duration.
\end{proposition}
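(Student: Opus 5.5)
The proof follows the supervisor's logic directly. The only analytic input is that $B(\mathcal K)$ is exactly the first-order variance \eqref{eq:varfull} evaluated on the window $\mathcal K$. This follows from Theorem~\ref{thm:certificate} applied to the views in $\mathcal K$, with the gauge fixed at the window's least-uncertain view; the suffix index then starts at the second view of the window. The certificate test $z_{1-\alpha/2}\sqrt{\operatorname{var}(\hat\theta)}\le\bar\delta$ is therefore equivalent to $B(\mathcal K)\le(\bar\delta/z_{1-\alpha/2})^2$. The proof splits into three parts: well-posedness of each window, the conditional certification claim, and the unconditional refusal claim.

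\emph{Step 1 (well-posedness).} On an admissible window, $S_v>0$. By Proposition~\ref{prop:registration}, $\hat\theta$ is then defined and exact on noiseless data. Below the gate, $\rho\ge\rho_{\min}>0$ gives $C^2=c_x^2+c_y^2>0$, so the sensitivities $g_k$ of Theorem~\ref{thm:certificate}, and hence $B(\mathcal K)$, are finite and computable in $O(K)$. For the side claim that $\rho\in[0,1]$ with equality iff the views agree, use the triangle inequality on the complex sum $\sum_k w_k \bar b_k a_k$, viewing planar vectors as complex numbers. Its modulus is $C$, and it is at most $\sum_k w_k|a_k||b_k|$. Equality holds iff all the terms $\bar b_k a_k$ share one argument, that is, every view's implied rotation $\angle a_k-\angle b_k$ coincides.

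\emph{Step 2 (certification within $L$ updates).} The supervisor's rule is to certify once the test has passed on $L$ consecutive certified windows, subject to its hysteresis and dwell. Suppose $L$ consecutive windows satisfy the inequality. Each one increments the consecutive-pass counter and none resets it, so the counter reaches $L$ no later than the last of those updates. This is an induction on the counter, and I would write it out against the policy's transition rule.

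\emph{Step 3 (refusal).} The transition out of \textsc{excite} fires only when the counter reaches $L$. The counter increments only on an admissible window passing the inequality. If no admissible window passes, the counter stays at zero, the mode never leaves \textsc{excite}, and no yaw is released. This holds for every realization, with no probabilistic assumption. Finally, I would note that the condition is stated purely through the finite-window quantities $S_v$, $\rho$, $\Delta v_m$ and $G_m$. Because the gauge is re-fixed inside each window, cumulative drift enters only through the increments $\Delta v_m$ within that window. This gives the closing statement that acquisition is governed by a finite-window signal-to-odometry condition rather than by mission duration.

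\textbf{Main obstacle.} The main difficulty is the gauge re-fixing in Step~1. One must confirm that Theorem~\ref{thm:certificate} holds on a rolling window whose first view is not the global origin of the odometry. This needs the automatic-centering property $\sum_k g_k=0$, so that the pre-window shared variance cancels and only within-window increments contribute. Everything else is bookkeeping on the hybrid automaton. I would also say explicitly that the existence of passing windows is assumed, not proved, as the statement itself stipulates.
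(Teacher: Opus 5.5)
Your proposal is correct and follows the paper's own proof sketch. $S_v>0$ gives a well-defined registration, Theorem~\ref{thm:certificate} applied per window gives $\operatorname{var}(\hat\theta)=B(\mathcal K)$, and the consecutive-window gate then yields certification in the first case and refusal by construction in the second.

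You make explicit several points the paper leaves implicit: the triangle-inequality proof that $\rho\le 1$, with equality iff the implied rotations agree; the cancellation of pre-window drift via $\sum_k g_k=0$; and $C>0$ from the correlation gate, which keeps the $g_k$ finite. The paper's Appendix~\ref{app:acquisition} instead adds arc-geometry bounds on the packet and odometry terms, which the conditional statement does not need.
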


\begin{proof}[Proof sketch]
\emph{Step 1.} Distinct poses on the acquisition arc give $S_v>0$, hence a
valid registration and finite $W$ by Proposition~\ref{prop:registration}.

\emph{Step 2.} Theorem~\ref{thm:certificate} gives
$\operatorname{var}(\hat\theta)=B(\mathcal K)$ for every rolling window;
the suffix term already contains the correlations within that window, so
no accumulation assumption across successive windows is required.

\emph{Step 3.} The supervisor leaves \textup{\textsc{excite}} only after $L$
consecutive windows satisfy its confidence and correlation gates. The
first condition in the statement therefore implies finite certification.
If the variance inequality is infeasible over all admissible windows, the
same gate prevents transition by construction. The fixed window bounds
odometry exposure, so waiting longer cannot substitute for informative
geometry.
\end{proof}

\begin{remark}[Validation]
\label{rem:acquisition-validation}
Both branches are observed exactly in the clean campaign: certification in
$200/200$ trials at nominal odometry noise and $188/200$ at $1$\,cm per
increment; principled refusal at $2$ to $5$\,cm, where no tested
estimator, including a fixed-lag smoother roughly $50\times$ more
expensive, extends the boundary. The tested boundary is governed primarily
by finite-window signal relative to odometry uncertainty, consistent with
the finite-window bound (Sec.~\ref{sec:experiments}).
\end{remark}

\begin{theorem}[Certified seeking: contraction and non-accumulating error]
\label{thm:seeking}
Let the task kinematics be $\dot e = -u$ with the seeking law
$u = k_p\,\hat e$, where the filtered estimate satisfies
$\hat e = \Rot(\tilde\theta)\,e + \eta$ with certified yaw error
$|\tilde\theta| \le \bar\delta < \pi/2$ and bounded estimate disturbance
$\norm{\eta} \le \bar\eta$. Then along closed-loop trajectories
\begin{equation}
\frac{d}{dt}\,\norm{e}
\;\le\; -\,k_p \cos\bar\delta \,\norm{e} \;+\; k_p\,\bar\eta ,
\label{eq:contraction}
\end{equation}
so $\norm{e}$ converges exponentially at rate $k_p\cos\bar\delta$ to the
ultimate ball of radius $\bar\eta/\cos\bar\delta$. The radius depends on
packet noise, filter lag, and odometry bias through $\bar\eta$ only, and
is independent of the mission horizon: the task vector is re-measured by
every packet through \eqref{eq:task}, so integrated odometry error enters
$\bar\eta$ only through the finite calibration window and never
accumulates.
\end{theorem}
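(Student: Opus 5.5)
The plan is a Lyapunov-type argument on $V=\tfrac12\norm{e}^2$. Differentiating along the closed loop $\dot e=-k_p\hat e$ and substituting $\hat e=\Rot(\tilde\theta)e+\eta$ gives
\begin{equation*}
\dot V = -k_p\, e^{\!\top}\Rot(\tilde\theta)\,e - k_p\, e^{\!\top}\eta .
\end{equation*}
The first term is handled by the planar identity $e^{\!\top}\Rot(\tilde\theta)e=\cos\tilde\theta\,\norm{e}^2$: the skew part of a $2\times 2$ rotation annihilates the quadratic form. The certified bound $|\tilde\theta|\le\bar\delta<\pi/2$ then gives $\cos\tilde\theta\ge\cos\bar\delta>0$. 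The second term is bounded by Cauchy--Schwarz as $k_p\norm{e}\bar\eta$. Together,
\begin{equation*}
\dot V\le -k_p\cos\bar\delta\,\norm{e}^2+k_p\bar\eta\norm{e}.
\end{equation*}

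To pass to $\norm{e}$ itself, I would use $\frac{d}{dt}\norm{e}=\dot V/\norm{e}$ wherever $e\neq 0$, which yields \eqref{eq:contraction} directly. The point $e=0$ is the only delicate case, since $\norm{e}$ is not differentiable there. I would treat it with the upper Dini derivative: at $e=0$ we have $\norm{\dot e}=k_p\norm{\eta}\le k_p\bar\eta$, so the Dini derivative of $\norm{e}$ is at most $k_p\bar\eta$, and the inequality still holds. The comparison lemma applied to the scalar linear ODE $\dot y=-k_p\cos\bar\delta\,y+k_p\bar\eta$ then gives
\begin{equation*}
\norm{e(t)}\le e^{-k_p\cos\bar\delta\,t}\norm{e(0)}+\frac{\bar\eta}{\cos\bar\delta}\bigl(1-e^{-k_p\cos\bar\delta\,t}\bigr).
\end{equation*}
This is exponential convergence at rate $k_p\cos\bar\delta$ to the ball of radius $\bar\eta/\cos\bar\delta$. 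If the deployed law saturates, I would note that saturation rescales $u$ by a positive scalar. That preserves the sign of the radial term and alters only the rate outside the saturation region, so the ultimate bound is unchanged.

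The horizon-independence claim is structural rather than analytic. By \eqref{eq:task}, $e_k=\Rot(\theta)(\ell_k^t-\ell_k^v)$ contains neither $s_k$ nor the accumulated pose error. The estimate is $\hat e=\Rot(\hat\theta)(\hat\ell^t-\hat\ell^v)$ after filtering. Writing $\hat\theta=\theta+\tilde\theta$ produces exactly the form $\Rot(\tilde\theta)e+\eta$. Here $\eta$ collects the rotated current packet noise, filter lag, and any residual bias, none of which depend on elapsed time. Odometry enters only through $\hat\theta$. By Theorem~\ref{thm:certificate} and Proposition~\ref{prop:acquisition}, $\hat\theta$ is computed on a finite rolling window in which only the increments $\Delta v_m$ inside the window appear. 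The certificate gate therefore enforces $|\tilde\theta|\le\bar\delta$ regardless of total drift.

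I expect this last step to be the main obstacle. The hypothesis $|\tilde\theta|\le\bar\delta$ holds only with the certificate's confidence level, not deterministically. I would therefore state the theorem conditionally on the certified event, exactly as hypothesized, and avoid claiming a probabilistic guarantee.
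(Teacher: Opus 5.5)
Your proposal is correct and follows the paper's proof essentially step for step: the Lyapunov function $V=\tfrac12\norm{e}^2$, the symmetric-part identity $e^{\!\top}\Rot(\tilde\theta)e=\cos\tilde\theta\,\norm{e}^2$, the Cauchy--Schwarz bound on the disturbance, division by $\norm{e}$ followed by the comparison lemma, the saturation remark, and the structural horizon-independence argument via \eqref{eq:task} and the finite calibration window. Your Dini-derivative treatment of $e=0$ and the explicit comparison bound tighten a point the paper passes over when it divides by $\norm{e}$; the paper also goes slightly beyond your conditional statement, adding a union-bound remark that gives the contraction conclusion with probability at least $1-\alpha_\theta-\alpha_\eta$.
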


\begin{proof}
\emph{Step 1 (radial derivative).}
With $V = \tfrac12\norm{e}^2$,
\begin{equation*}
\dot V = -\,e^{\!\top} u
= -\,k_p\, e^{\!\top}\Rot(\tilde\theta)\,e \;-\; k_p\, e^{\!\top}\eta .
\end{equation*}

\emph{Step 2 (rotation bound).}
The symmetric part of $\Rot(\tilde\theta)$ is $\cos\tilde\theta\, I_2$, so
\begin{equation*}
e^{\!\top}\Rot(\tilde\theta)e = \cos\tilde\theta\,\norm{e}^2
\ge \cos\bar\delta\,\norm{e}^2, \qquad |\tilde\theta|\le\bar\delta<\pi/2.
\end{equation*}

\emph{Step 3 (comparison).}
Bounding the disturbance term by $k_p\norm{e}\bar\eta$ and dividing by
$\norm{e}$ yields \eqref{eq:contraction}; the comparison lemma gives
exponential convergence to $\bar\eta/\cos\bar\delta$.

\emph{Step 4 (horizon independence).}
$\eta$ collects packet noise, the filter's first-order lag, and the
effect of odometry bias accumulated over at most one calibration window
of fixed length; none of these grows with mission time, so $\bar\eta$,
and hence the ultimate radius, is horizon independent. Saturation of $u$
rescales the rate but preserves the sign of $\dot V$, and the certified
event $|\tilde\theta|\le\bar\delta$ is calibrated by
Remark~\ref{rem:coverage}. More generally, on any finite horizon where
$\Pr\{|\tilde\theta|\le\bar\delta\}\ge1-\alpha_\theta$ and
$\Pr\{\sup_t\lVert\eta(t)\rVert\le\bar\eta\}\ge1-\alpha_\eta$, the union
bound gives the contraction conclusion with probability at least
$1-\alpha_\theta-\alpha_\eta$.
\end{proof}

\begin{lemma}[Unicycle realization of the virtual seeking law]
\label{lem:unicycle}
The controller realized in code (Supervisor.hpp) is a saturated unicycle
law, not the virtual holonomic law $\dot e=-u$ assumed above: forward speed
and turn rate are commanded on separate channels,
$v=\min(v_{\max},k_p\norm{\hat e})\max(0,\cos\alpha)$ and
$\omega=k_\omega\alpha$, where $\alpha$ is the heading error between the
vehicle's heading and $\angle\hat e$. Writing $u=k_p\hat e$ as in
Theorem~\ref{thm:seeking}, the realized task dynamics satisfy
$\dot e = -u + d(t)$ with
$d(t) = u - v\,\Rot(-\alpha)\,\hat e/\norm{\hat e}$. Whenever the forward
channel is unsaturated ($k_p\norm{\hat e}\le v_{\max}$) and $\cos\alpha\ge
0$, $\norm{d(t)} = k_p\norm{\hat e}\,\lvert\sin\alpha\rvert$, which
vanishes as $\alpha\to0$; otherwise (saturation or $\cos\alpha<0$),
$\norm{d(t)}\le k_p\norm{\hat e}+v_{\max}$, a fixed bound whenever
$\norm{\hat e}$ is bounded. Consequently the realized dynamics are the
virtual dynamics with an enlarged disturbance: with any bound
$\norm{d(t)}\le\bar d$ over the regime of interest,
\begin{equation*}
\dot e = -k_p\bigl[\Rot(\tilde\theta)e + \eta_{\rm eff}\bigr],
\quad
\eta_{\rm eff}=\eta-\frac{d}{k_p},
\quad
\norm{\eta_{\rm eff}}\le\bar\eta+\frac{\bar d}{k_p},
\end{equation*}
and Theorem~\ref{thm:seeking} applies verbatim with $\bar\eta$ replaced by
$\bar\eta+\bar d/k_p$: its proof uses only a bound on the disturbance
norm, not its source. We claim no rigorous gain conditions for the full
heading-error cascade; the realization residual is bounded, vanishes as
the heading error converges, and its effect is validated empirically in
closed loop, including on the physics-based differential-drive vehicle.
\end{lemma}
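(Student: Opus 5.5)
The plan is to work directly in unicycle kinematics: compute the realized task-vector rate, then bound its deviation from the virtual law $-u$. Since $e=p_o-s$ with $p_o$ static, $\dot e=-\dot s$. The vehicle moves with speed $v$ along its heading $h$ (a unit vector in the odometry frame). By definition of the heading error $\alpha$, $h=\Rot(-\alpha)\,\hat e/\norm{\hat e}$ under the sign convention that $\alpha$ measures from $h$ to $\angle\hat e$. Hence $\dot e=-v\,\Rot(-\alpha)\hat e/\norm{\hat e}$. Adding and subtracting $u=k_p\hat e$ gives $\dot e=-u+d$ with $d$ exactly as stated. This step is bookkeeping, and the only care needed is fixing the sign convention of $\alpha$ consistently with the code.

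Next I would bound $\norm{d}$ in two cases.

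\emph{Unsaturated case with $\cos\alpha\ge0$.} Here $v=k_p\norm{\hat e}\cos\alpha$, so
\[
d=k_p\norm{\hat e}\bigl(\hat n-\cos\alpha\,\Rot(-\alpha)\hat n\bigr),\qquad \hat n=\hat e/\norm{\hat e}.
\]
Expand in the orthonormal basis $(\hat n,S\hat n)$, using $\Rot(-\alpha)\hat n=\cos\alpha\,\hat n-\sin\alpha\,S\hat n$. The vector in parentheses becomes
\[
\sin^2\!\alpha\,\hat n+\sin\alpha\cos\alpha\,S\hat n,
\]
whose norm is $\lvert\sin\alpha\rvert$. This gives $\norm{d}=k_p\norm{\hat e}\lvert\sin\alpha\rvert$, which vanishes as $\alpha\to0$.

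\emph{Remaining case.} The triangle inequality gives $\norm{d}\le\norm{u}+v\le k_p\norm{\hat e}+v_{\max}$, using $v\le v_{\max}$ and the unit norm of the rotated direction.

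Finally I would absorb $d$ into the estimate disturbance. Substituting $\hat e=\Rot(\tilde\theta)e+\eta$ gives
\[
\dot e=-k_p\bigl[\Rot(\tilde\theta)e+\eta-d/k_p\bigr].
\]
Setting $\eta_{\rm eff}=\eta-d/k_p$, the triangle inequality yields $\norm{\eta_{\rm eff}}\le\bar\eta+\bar d/k_p$. Inspecting the proof of Theorem~\ref{thm:seeking}, Steps 1 to 3 use the disturbance only through the Cauchy--Schwarz bound $\lvert e^{\top}\eta\rvert\le\norm{e}\bar\eta$, so the contraction inequality \eqref{eq:contraction} and the ultimate-ball conclusion transfer verbatim with the enlarged bound.

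The main obstacle is circularity: $\bar d$ depends on $\norm{\hat e}$, which depends on $\norm{e}$. I would handle this by restricting to a regime of interest in which $\norm{\hat e}$ is a priori bounded, for instance a sublevel set of $\norm{e}$ that the contraction renders forward invariant. I would state $\bar d$ as an assumed bound on that regime, exactly as the lemma phrases it, rather than proving invariance of the heading cascade. The latter is explicitly disclaimed in the statement.
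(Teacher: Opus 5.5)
Your proposal is correct and follows essentially the same route as the paper. You use the same kinematic identity $\dot s = v\,\Rot(-\alpha)\hat e/\norm{\hat e}$ and the same perpendicular-residual computation; your basis $(\hat n, S\hat n)$ is the paper's coordinate choice $\hat g=(1,0)^{\top}$ written coordinate-free. The remaining-case triangle-inequality bound and the absorption of $d/k_p$ into $\eta_{\rm eff}$ also match, and your note that $\bar d$ depends on $\norm{\hat e}$ corresponds to the paper's ``regime of interest'' caveat and its disclaimer of gain conditions for the heading cascade.
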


\begin{proof}[Proof sketch]
\emph{Step 1 (kinematics).}
The vehicle drives forward along its own heading, so
\begin{equation*}
\dot s = v\begin{pmatrix}\cos(\angle\hat e-\alpha)\\ \sin(\angle\hat e-\alpha)\end{pmatrix}
= v\,\Rot(-\alpha)\,\frac{\hat e}{\norm{\hat e}},
\end{equation*}
and since the target and relay are static, $\dot e = -\dot s$. Adding and
subtracting $u=k_p\hat e$ gives
\begin{equation*}
\dot e = -u + \Bigl(u - v\,\Rot(-\alpha)\frac{\hat e}{\norm{\hat e}}\Bigr)
= -u + d(t).
\end{equation*}

\emph{Step 2 (unsaturated bound).}
When unsaturated, $v=k_p\norm{\hat e}\cos\alpha$: the realized velocity is
the \emph{projection} of the ideal velocity onto the heading direction, so
the two vectors need not have equal magnitude, and the residual is the
perpendicular component. Explicitly, with $\hat g=\hat e/\norm{\hat e}$
and coordinates in which $\hat g=(1,0)^{\!\top}$, so that
$\Rot(-\alpha)\hat g=(\cos\alpha,\,-\sin\alpha)^{\!\top}$,
\begin{equation*}
d = k_p\norm{\hat e}
\begin{pmatrix}1-\cos^2\alpha\\ \cos\alpha\sin\alpha\end{pmatrix}
= k_p\norm{\hat e}\,\sin\alpha
\begin{pmatrix}\sin\alpha\\ \cos\alpha\end{pmatrix},
\end{equation*}
hence $\norm{d(t)} = k_p\norm{\hat e}\,\lvert\sin\alpha\rvert$, which
vanishes as $\alpha\to0$.

\emph{Step 3 (saturated bound).}
Under saturation or $\cos\alpha<0$,
$\norm{v\,\Rot(-\alpha)\hat e/\norm{\hat e}}\le v_{\max}$, so the triangle
inequality gives the fixed bound
\begin{equation*}
\norm{d(t)} \;\le\; k_p\norm{\hat e} + v_{\max}.
\end{equation*}

\emph{Step 4 (heading loop, scope of the claim).}
The commanded turn rate $\omega=k_\omega\alpha$ drives
$\dot\alpha=-k_\omega\alpha+\rho(t)$, where $\rho$ collects the rotation
of the commanded direction $\angle\hat e$ induced by the vehicle's own
motion and by filter updates. We do not develop gain conditions for the
full cascade; the lemma asserts only the two regime bounds of Steps 2--3:
$d$ is bounded whenever $\norm{\hat e}$ is bounded, and vanishes with the
heading error. So the virtual-controller contraction of
Theorem~\ref{thm:seeking} applies with the enlarged disturbance
$\eta_{\rm eff}$, and the closed-loop experiments (including the
physics-based differential-drive validation) confirm the realization
residual does not alter the observed contraction.
\end{proof}

\begin{remark}[Validation]
\label{rem:seeking-validation}
Over $600$\,s the closed loop holds $0.064$\,m median station error while
dead reckoning drifts to a median of $27$\,m, and it is statistically
indistinguishable from the known-yaw oracle at $0.065$\,m: the certificate
pipeline extracts essentially all available information
(Sec.~\ref{sec:experiments}).
\end{remark}

\begin{proposition}[Recovery under piecewise-constant relay yaw]
\label{prop:recovery}
Let the relay yaw be piecewise constant with steps of size $\Delta$ and
dwell longer than one window turnover, and let adoption require $L$
consecutive certified windows whose estimates are $\chi^2$ inconsistent
with the control path at level $\alpha_\chi$, persistence through a full
window turnover, and a minimum inter-adoption interval. Assume the
first-order standardized yaw residuals are Gaussian and that $c$ disjoint
window turnovers contributing to an adoption are independent. Then:
(i) with no step, the probability of a false adoption is bounded by
$\alpha_\chi^{\,c}$;
(ii) after a step with $|\Delta|$ large relative to the certified band,
every post-step certified window is inconsistent with the stale control
path with probability approaching one, so adoption occurs within $L$
windows plus one turnover once motion supplies certified views;
(iii) a step needs no adoption whenever its combined error satisfies
$|\tilde\theta_{\rm old}|+|\Delta|\le\delta_c<\pi/2$: the loop remains
contracting with ultimate radius $\bar\eta/\cos\delta_c$.
\end{proposition}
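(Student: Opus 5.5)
The three claims are essentially independent, so I will treat them in order, each resting on a different earlier result: (i) on the Gaussian first-order residual model certified by Theorem~\ref{thm:certificate} and Remark~\ref{rem:coverage}, (ii) on a non-centrality argument for the same statistic, and (iii) directly on Theorem~\ref{thm:seeking}.

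\emph{Part (i).} With no step, each certified window has standardized residual $z = (\hat\theta_{\mathcal K} - \theta_{\rm ctrl})/\sqrt{\operatorname{var}(\hat\theta_{\mathcal K})}$. Under the stated Gaussian assumption, $z^2$ is $\chi^2_1$ to first order, so any one window is declared inconsistent with probability at most $\alpha_\chi$. Adoption requires inconsistency to persist through full window turnovers. Consecutive windows within a turnover share views and are therefore correlated, so I will not multiply over all $L$ windows. Instead I will extract one representative window from each of the $c$ disjoint turnovers. The event ``adoption'' is contained in the intersection of their inconsistency events. By the assumed independence across disjoint turnovers, this gives the bound $\alpha_\chi^{\,c}$. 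The minimum inter-adoption interval only removes events, so it can only help. One subtlety I will flag: the control-path yaw $\theta_{\rm ctrl}$ has its own estimation error, which must be folded into the $\chi^2$ normalization or otherwise treated as fixed. I will state the bound conditional on it.

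\emph{Part (ii).} After a step of size $\Delta$, every post-step window that lies entirely after the step estimates $\theta+\Delta$. Its standardized residual against the stale path is then Gaussian with mean $\Delta/\sigma_{\mathcal K}$. Certification forces $z_{1-\alpha/2}\sigma_{\mathcal K}\le\bar\delta$, so the non-centrality is at least $|\Delta| z_{1-\alpha/2}/\bar\delta$. The miss probability per window is therefore bounded by a Gaussian tail in $|\Delta|/\bar\delta - z_{\chi}$, which tends to zero as $|\Delta|$ grows relative to the certified band. A union bound over the $L$ windows gives adoption with probability approaching one. Windows that straddle the step are mixtures and may fail to be inconsistent; this is exactly why one full turnover must elapse before the windows are clean. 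Together with ``once motion supplies certified views,'' which is the condition of Proposition~\ref{prop:acquisition}, this yields the latency of $L$ windows plus one turnover.

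\emph{Part (iii).} If no adoption occurs, the controller keeps its old estimate. The effective yaw error is then $\tilde\theta_{\rm old}+\Delta$, with magnitude at most $\delta_c<\pi/2$ by the triangle inequality. The proof of Theorem~\ref{thm:seeking} used only $|\tilde\theta|\le\bar\delta<\pi/2$ and $\norm{\eta}\le\bar\eta$. I will therefore re-invoke it with $\bar\delta$ replaced by $\delta_c$, which gives contraction at rate $k_p\cos\delta_c$ to the ball of radius $\bar\eta/\cos\delta_c$.

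\emph{Main obstacle.} I expect the main difficulty to be (ii)'s handling of straddling windows and of the dependence between the filtered control path and the post-step estimates. I would first try to make the claim asymptotic in $|\Delta|/\bar\delta$ and count only post-turnover windows, rather than attempt a sharp finite-sample rate.
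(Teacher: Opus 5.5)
Your proposal is correct and follows essentially the paper's route. For (i), a per-window $\chi^2_1$ false-alarm probability is multiplied over $c$ packet-disjoint turnovers with the control path held fixed; for (ii), a noncentrality argument; for (iii), the triangle inequality fed into Theorem~\ref{thm:seeking} with $\delta_c$ in place of $\bar\delta$. The only difference is cosmetic: in (ii) the paper writes the exact noncentral-$\chi^2_1$ detection probability with $\lambda=\Delta^2/(\operatorname{var}(\hat\theta_w)+\operatorname{var}_{\rm ctrl})$, whereas you lower-bound the noncentrality through the certification gate and apply a union bound over the $L$ windows; including the control-path variance in your normalization would only cost a constant factor in that bound.
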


\begin{proof}[Proof sketch]
\emph{Step 1.} Under no step, a certified window estimate differs from
the control path by a zero-mean error whose standardized square exceeds
the $\chi^2_1$ threshold with probability $\alpha_\chi$ under the stated
first-order Gaussian model. Requiring persistence across $c$ independent,
disjoint turnovers multiplies the bound.

\emph{Step 2.} After a large step, the window estimate concentrates near
$\theta + \Delta$ while the control path holds $\theta$; the
standardized discrepancy grows as $\Delta^2$ against the summed
variances, so the inconsistency test passes every window once windows are
certified, and the persistence counter reaches $L$ in $L$ packets plus
the turnover.

\emph{Step 3.} The triangle inequality bounds the post-step yaw error by
$|\tilde\theta_{\rm old}|+|\Delta|$. When this is at most
$\delta_c<\pi/2$, Theorem~\ref{thm:seeking} applies with $\delta_c$, so no
adoption is required for task success.
\end{proof}

\begin{remark}[Validation]
\label{rem:recovery-validation}
In the clean campaign, task success is retained in $150/150$ trials at
every tested $\Delta \in \{20^\circ,40^\circ,80^\circ\}$ and mission
phase; recovery is threshold sharp, with $80^\circ$ steps recalibrating
in $98$ to $100\%$ of required cases once motion supplies views and
$20^\circ$ steps often requiring no action; and zero false adoptions
occurred in $200$ undisturbed trials per policy, with a Wilson 95\% upper
bound of $1.9\%$ (Sec.~\ref{sec:experiments}).
\end{remark}

\begin{remark}[Scope]
Proposition~\ref{prop:recovery} is a robustness extension; the static-relay
results of Theorems~\ref{thm:gauge} to \ref{thm:certificate} do not depend
on it. Recovery requires motion by Corollary~\ref{cor:necessity}, and the
station-keeping case delays recalibration until the task itself demands
motion. That is the correct behavior: at $e \approx 0$, yaw error is
provably task irrelevant by \eqref{eq:task}.
\end{remark}

\section{Experimental Validation}
\label{sec:experiments}

\begin{figure*}[t]
\centering
\includegraphics[width=0.92\textwidth]{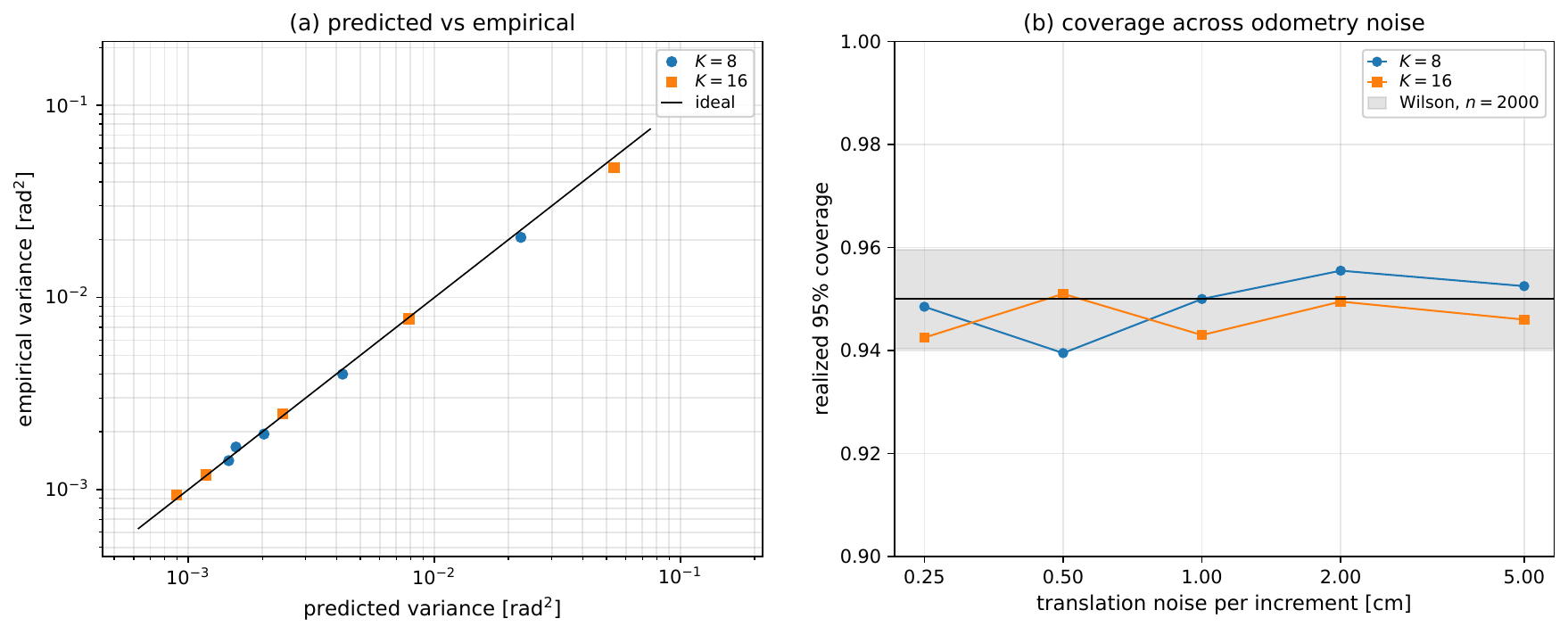}
\caption{Certificate calibration on the clean campaign.
(a)~Predicted versus empirical yaw variance across all ten odometry
conditions. (b)~Realized 95\% coverage per condition against the nominal
level with the finite-sample Wilson band for $2{,}000$ trials per point.}
\label{fig:certificate}
\end{figure*}

All quantitative claims in this section come from a single
provenance-locked campaign: every manifest records the producing commit
with a clean tree, and all artifacts are checksummed. Randomized
geometries draw target range, relay range, relay yaw, initial heading, and
frame placement per trial; statistics are medians with Wilson or bootstrap
$95\%$ intervals; no trial is discarded.

\textbf{Setup and metrics.} The unicycle simulation runs at $100$\,Hz with
relay packets at $20$\,Hz for $150$\,s, except for the $600$\,s drift study.
Target and relay ranges are uniform on $[6,15]$ and $[4,12]$\,m, all
orientations are uniform on $[-\pi,\pi)$, and the entities remain at least
$1$\,m apart. The registration retains at most $64$ packets or $4$\,s and
certification requires three consecutive windows with at least eight
packets, correlation $\rho\ge0.5$ (Proposition~\ref{prop:acquisition}),
and a 95\% yaw half-width below $10^\circ$. Nominal noise is $0.10$\,m and $1^\circ$ per relay channel,
$0.5$\,cm translational and $0.001$\,rad heading noise per odometry
increment, with body bias $(1,-0.5)$\,cm/s. Success means reaching
$0.25$\,m and then remaining within $0.35$\,m for $10$\,s; station RMSE is
computed over the final $30$\,s. Runtimes are Release-mode C++ update costs
on an Intel Core Ultra 9 275HX.

Fig.~\ref{fig:trajectories} shows representative closed-loop missions:
the exploratory arc, certified release into seeking, station keeping, and,
under a mid-transit relay rotation, the detection detour and recovery. The
dead-reckoned track is overlaid to make the operating premise concrete:
the pose estimate the vehicle would trust under dead reckoning visibly
diverges while the task loop remains locked to the target.

\begin{figure}[t]
\centering
\includegraphics[width=\linewidth]{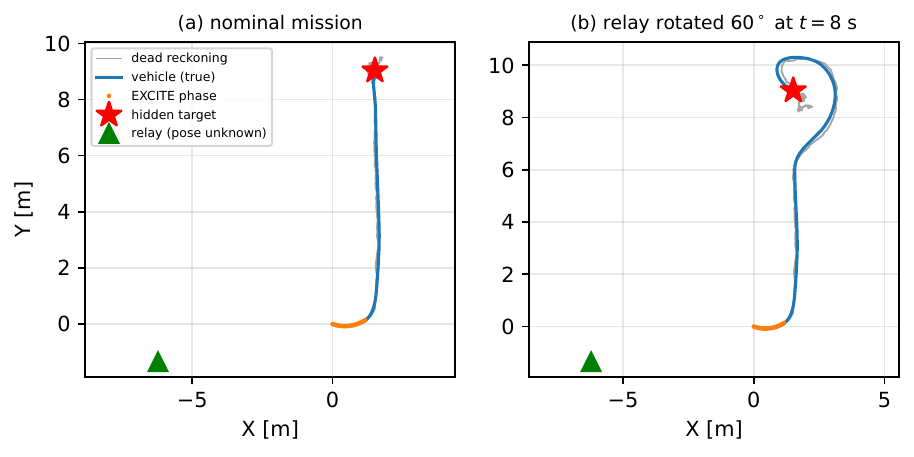}
\caption{Representative missions (illustrative runs at the campaign
commit; all statistics come from the randomized campaign).
(a)~Exploratory arc, certified seeking, station keeping.
(b)~A $60^\circ$ relay rotation at $t=8$\,s forces re-detection: the
detour and recovery are visible in the track.}
\label{fig:trajectories}
\end{figure}

\textbf{Baselines.} Across $200$ paired randomized geometries the proposed
estimator succeeds in $200/200$ trials with $0.064$\,m median station
error, statistically indistinguishable from the known-yaw oracle at
$0.065$\,m: the paired oracle-minus-proposed median is $+0.001$\,m with
bootstrap 95\% interval $[-0.004,+0.005]$\,m. A fair sequential EKF with the same packets, the same two-view
initialization, and a matched body-frame bias state reaches $0.087$\,m;
its paired penalty of $+0.019$\,m has bootstrap interval
$[+0.015,+0.028]$. Endpoint-only registration succeeds in $194/200$ with
roughly twice the acquisition time, quantifying the value of the full
window. A fixed-lag smoother roughly $50\times$ more expensive is
statistically indistinguishable from the proposed estimator in the tested
regime and extends no operating boundary: the
closed-form estimator is information sufficient. Without exploratory
motion, $0/200$ trials certify, the empirical face of
Corollary~\ref{cor:necessity}.

\textbf{Certificate coverage.} Across ten odometry conditions with
$2{,}000$ realizations each ($20{,}000$ trials), per-condition
empirical-to-predicted variance ratios lie between $0.88$ and $1.07$ with
a pooled ratio of $0.91$, and realized coverage of the $95\%$ interval is
$94.0$ to $95.6\%$ per condition and $94.8\%$ pooled
(Remark~\ref{rem:coverage}, Fig.~\ref{fig:certificate}). This regime spans
the $8$--$16$-view windows enforced by the supervisor's minimum-window
gate (Sec.~\ref{sec:closedloop}); a weakly conditioned two-view window can
undercover substantially, since a $0.25$\,m chord case overstates
precision by $74\%$ and covers only $91.6\%$ at the nominal $95\%$ level
(Table~\ref{tab:conditioning}, App.~\ref{app:tables}). Such windows are
excluded from certification by construction, not merely by favorable
testing.

\textbf{Covariance ablation.} To isolate the contribution of modeling
cross-view odometry correlation in closed form, Fig.~\ref{fig:covablation}
reruns the same odometry-noise grid, with an independent Monte Carlo seed,
under three certificate variance models: the proposed correlated
random-walk certificate (\emph{full}, Theorem~\ref{thm:certificate}), a
naive model that treats each view's pose error as independent of the
others (\emph{diag}), and a model that drops the pose term entirely, as if
odometry were exact (\emph{packet-only}). Pooled over the grid,
\emph{full} holds ratio $0.95$ and $95.0\%$ coverage, matching
Remark~\ref{rem:coverage}'s $0.91$ and $94.8\%$ up to the Monte Carlo
sampling variation expected from the different seed; \emph{diag} degrades
to ratio $3.64$ and $82.1\%$ coverage; \emph{packet-only} degrades further
to ratio $9.22$ and $69.3\%$ coverage. False certification, meaning a
window whose reported half-width passes the $10^\circ$ release gate
($z_{0.975}\sqrt{\operatorname{var}(\hat\theta)}\le10^\circ$) yet whose
actual yaw error still exceeds $10^\circ$, occurs in $0.3\%$, $5.1\%$, and
$6.9\%$ of trials respectively: a narrower, gate-conditioned statistic
than coverage, since most miscovered \emph{diag}/\emph{packet-only}
windows also fail the release gate outright. Closed-loop success stays
$200/200$ for all three at the nominal condition: the wrong models are
dishonest about their own confidence well before that dishonesty costs a
mission. Per-cell,
\emph{full} never leaves a $0.92$--$1.03$ ratio band anywhere in the grid,
while both ablations worsen monotonically with odometry noise and window
size (worst cells: \emph{diag} ratio $6.71$/coverage $52.2\%$/false-cert
$27.9\%$; \emph{packet-only} ratio $57.47$/coverage $22.0\%$/false-cert
$39.5\%$), consistent with more views accumulating more correlated error
that only the full model accounts for.

\begin{figure}[t]
\centering
\includegraphics[width=\linewidth]{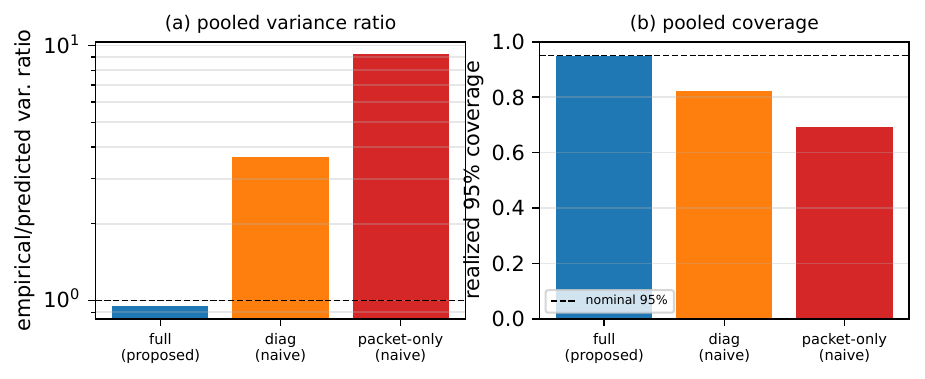}
\caption{Covariance ablation, pooled over the ten-condition odometry grid
($2{,}000$ realizations/cell). (a)~Empirical-to-predicted yaw-variance
ratio (log scale; dashed line is exact). (b)~Realized $95\%$ coverage
(dashed line is nominal). Only the proposed certificate (\emph{full})
stays calibrated; both naive ablations undercover.}
\label{fig:covablation}
\end{figure}

\textbf{Robustness.} Fig.~\ref{fig:robustness} summarizes the
one-factor-at-a-time sweeps: $200/200$ success at
every relay noise level up to
$0.4$\,m range and $4^\circ$ bearing noise, with median error below
$0.17$\,m at the harshest setting. Communication stress preserves
$200/200$ success through $50\%$ dropout, $0.5$\,s fixed delay, $0.2$\,s
jitter, and $20\%$ gross packet outliers; dropout, delay, and jitter
leave accuracy essentially flat, and only gross outliers degrade it,
gracefully reaching $0.204$\,m in the hardest case.

\begin{figure}[t]
\centering
\includegraphics[width=\linewidth]{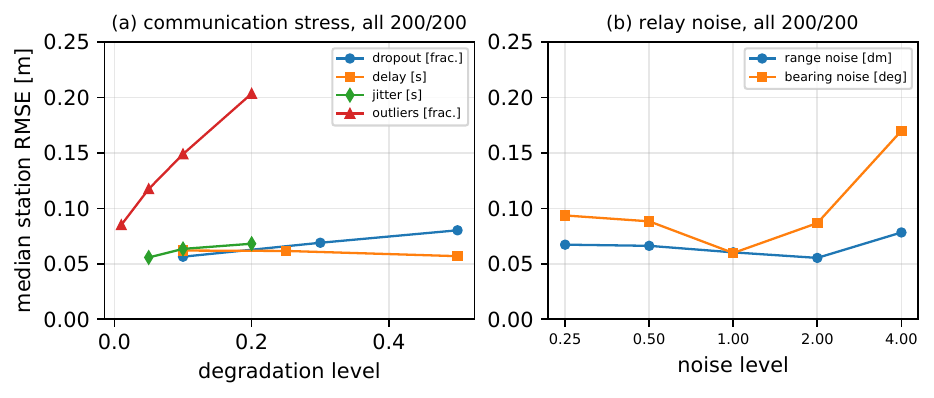}
\caption{Robustness sweeps at $200$ trials per cell, $100\%$ success in
every cell shown. (a)~Communication stress: dropout, delay, and jitter
leave station accuracy essentially flat; gross packet outliers degrade it
gracefully. (b)~Relay range and bearing noise sweeps.}
\label{fig:robustness}
\end{figure} Odometry sweeps confirm the
operating boundary of Proposition~\ref{prop:acquisition}
(Remark~\ref{rem:acquisition-validation}). Bias to $5$\,cm/s, scale error
to $10\%$, and heading drift to $0.5^\circ$/s all retain $200/200$. The
analytical certificate
(Theorem~\ref{thm:certificate}) covers translational random-walk odometry
under Assumption~\ref{ass:model}; bias, scale error, and heading drift lie
outside that stochastic model and are reported here only as empirical
robustness results, not as certified operating conditions.

\textbf{Physics-based ROS~2/Gazebo validation.} The identical estimator,
certificate, and controller also drive a physics-based differential-drive
vehicle in Gazebo: wheel joints with contact friction and chassis
inertia, wheel odometry from the simulator's own drive-train integration
rather than ground truth, and asynchronous packet delivery through the
same ROS~2 interfaces intended for deployment. Table~\ref{tab:gazebo}
reports $30$ independent launches per scenario with seeded channel
realizations; Gazebo's physics integration is not bitwise deterministic
run to run, so only the aggregate statistics below are interpreted.
Nominal station RMSE rises modestly to $0.077$\,m from the offline
$0.065$\,m, consistent with wheel slip and inertia the offline unicycle
omits, and success is $30/30$ in every scenario. Under a combined
communication-degradation scenario ($30\%$ dropout, $0.2$\,s delay,
$0.1$\,s jitter, and $5\%$ outliers applied simultaneously, each factor
milder than the offline one-factor sweeps but combined rather than
isolated), station RMSE rises to $0.164$\,m median while success remains
$30/30$. Injecting the $60^\circ$ relay rotation mid-transit, four to
five meters from the target rather than after arrival, the supervisor
adopts the new registration at a median $7.3$\,s, essentially identical
across seeds because the disturbance vastly exceeds the certificate's
noise floor and the same persistence gate that produced zero false
adoptions above also sets a fixed reaction floor here; the vehicle
recovers the task at a median $10.3$\,s and every trial succeeds. Every
Gazebo run is provenance-locked to a single clean commit under the same
result contract as the offline campaign, and the batch runner, ROS~2
packages, and world file ship with the paper.

\begin{table}[t]
\caption{Gazebo Validation}
\label{tab:gazebo}
\centering
\begingroup\scriptsize\itshape
Physics-based vehicle, 30 independent launches per scenario; bracketed
values are interquartile ranges. Station RMSE is over the final 30\,s.
Reach is the time to first satisfy the success threshold of
Sec.~\ref{sec:experiments}.
\par\endgroup
\vspace{2pt}
\footnotesize
\setlength{\tabcolsep}{3.0pt}
\begin{tabular}{@{}lccc@{}}
\toprule
Scenario & Success & Station RMSE [m] & Reach [s]\\
\midrule
Nominal & $30/30$ & $0.077\;[0.051,0.094]$ & $10.9\;[10.8,11.0]$\\
Comm.\ stress & $30/30$ & $0.164\;[0.149,0.198]$ & $11.4\;[11.2,11.7]$\\
Mid-transit $60^\circ$ step & $30/30$ & $0.043\;[0.031,0.070]$ & $17.1\;[17.0,17.3]$\\
\bottomrule
\end{tabular}
\end{table}

\begin{figure*}[t]
\centering
\includegraphics[width=\textwidth]{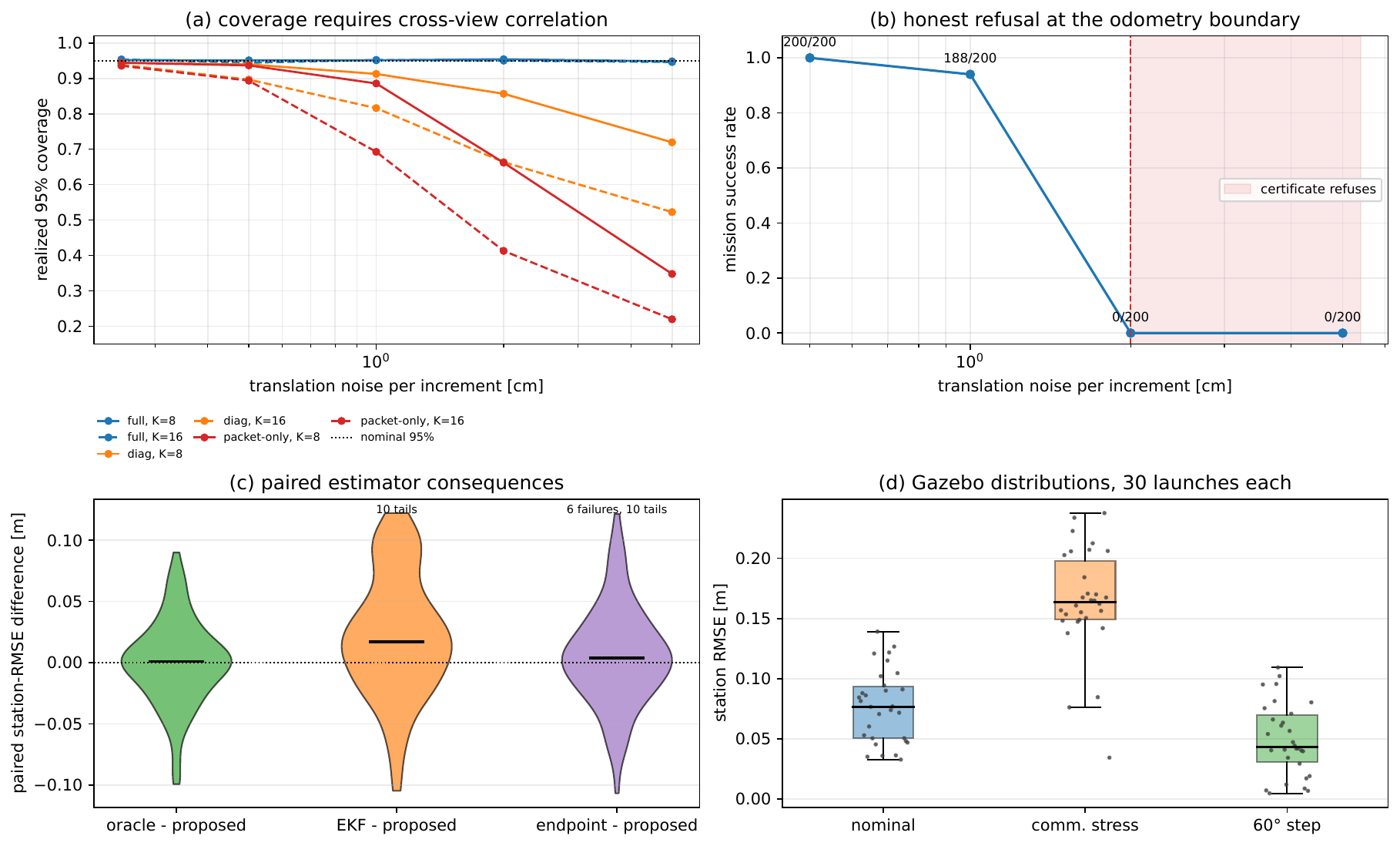}
\caption{Operating envelope across the reported simulation studies. (a)~Per-condition
coverage of the nominal $95\%$ yaw interval for the full correlated model,
the diagonal approximation, and packet noise alone ($2000$ trials per
cell). (b)~Closed-loop success versus translational odometry noise per
integration step ($200$ trials per level); the dashed line marks the first
tested level at which the certificate consistently refuses release.
(c)~Central paired station-RMSE differences over mutually successful trials
on the same $200$ randomized geometries; positive values favor the proposed
estimator, while failures and off-scale tails are annotated separately.
(d)~Physics-based Gazebo
station-RMSE distributions over $30$ launches per scenario. Boxes show the
interquartile range and every dot is one launch.}
\label{fig:operating-envelope}
\end{figure*}

Figure~\ref{fig:operating-envelope} exposes details hidden by pooled
summaries. Only the full covariance retains approximately nominal coverage
across view count and increment-noise level; dropping cross-view terms or
odometry uncertainty produces systematic undercoverage. The same model
therefore rejects rather than releases an overconfident registration at the
$0.02$ and $0.05$\,m-per-step levels, while retaining $188/200$ success at
$0.01$\,m per step. Paired errors separate estimator effects from randomized
geometry: the known-yaw oracle remains centered near the proposed method,
whereas the EKF and endpoint registration have positive error shifts and
heavier tails. Finally, the Gazebo distributions support the aggregate
claims in Table~\ref{tab:gazebo} without assigning unwarranted precision to
any individual, non-bitwise-reproducible physics launch.

\textbf{Reproducibility.} The accompanying artifact contains the simulator,
campaign runner, tests, figure scripts, and video pipeline; a single command
regenerates every figure from the archived campaign, and every manifest
records the producing commit, build identity, and configuration.

\textbf{Long horizon and disturbance.} Over $600$\,s the task error holds
near $0.06$\,m at every tested body bias; at $5$\,cm/s, median dead
reckoning error reaches $27$\,m, matching Theorem~\ref{thm:seeking}
(oracle comparison in Remark~\ref{rem:seeking-validation}).
Fig.~\ref{fig:disturbance} shows the threshold-sharp recovery of
Proposition~\ref{prop:recovery} (Remark~\ref{rem:recovery-validation}):
large steps recalibrate at the observed recovery rates once motion
supplies views, while small at-rest steps often remain unadopted
precisely because they sit near the certified tolerance, with the task
unaffected.

\begin{figure}[t]
\centering
\includegraphics[width=\linewidth]{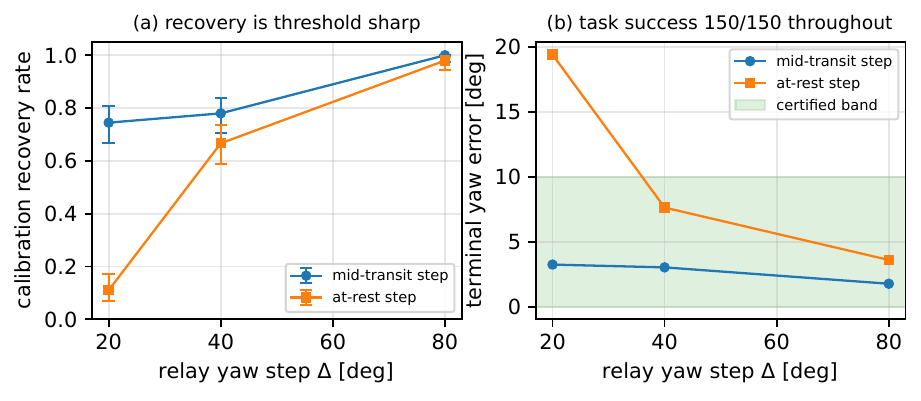}
\caption{Relay-frame disturbance study with Wilson $95\%$ intervals.
(a)~Calibration recovery rate over cases that require recovery.
(b)~Terminal yaw error against the certified band: small at-rest steps
may remain unadopted, yet task success is $150/150$ everywhere.}
\label{fig:disturbance}
\end{figure}

\begin{table}[t]
\caption{Closed-Loop Comparison}
\label{tab:baselines}
\centering
\begingroup\scriptsize\itshape
Over 200 paired randomized geometries. Station RMSE is the median with
bootstrap 95\% interval; runtime is the median estimator update cost.
Anchor-assum.\ ablation deliberately violates the globally-known-trajectory
assumption of method [8],[9] by pinning the relay-to-odometry yaw at zero
once that anchor is removed; it is not a fair competitor.
\par\endgroup
\vspace{2pt}
\footnotesize
\setlength{\tabcolsep}{3.0pt}
\begin{tabular}{@{}lcccc@{}}
\toprule
Method & Success & Station RMSE [m] & Acq.\ [s] & Update [$\mu$s]\\
\midrule
Proposed & $200/200$ & $0.064\;[0.058,0.069]$ & $14.0$ & $3.6$\\
Known-yaw oracle & $200/200$ & $0.065\;[0.059,0.072]$ & $12.8$ & $4.1$\\
Sequential EKF & $200/200$ & $0.087\;[0.080,0.095]$ & $14.2$ & $0.3$\\
Endpoint only & $194/200$ & $0.071\;[0.064,0.079]$ & $27.1$ & $0.2$\\
Fixed-lag smoother & $200/200$ & $0.064\;[0.058,0.069]$ & $14.0$ & $181.7$\\
Anchor-assum.\ abl. & $67/200$ & $35.2\;[7.4,51.6]$ & $19.2$ & $11.1$\\
No excitation & $0/200$ & \multicolumn{3}{c}{never certified}\\
\bottomrule
\end{tabular}
\end{table}

\begin{figure}[t]
\centering
\includegraphics[width=\linewidth]{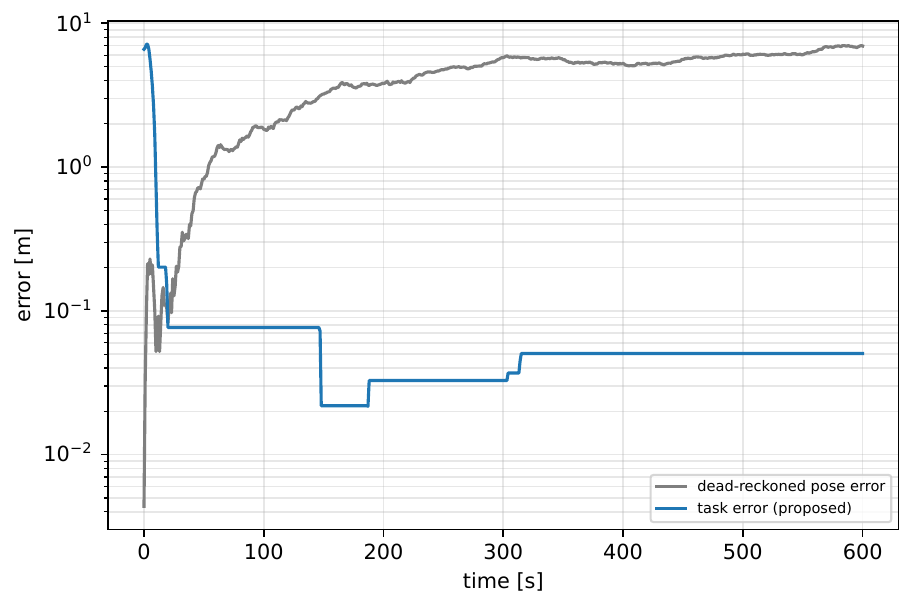}
\caption{Six-hundred-second station keeping: the task error stays within
the ultimate ball of Theorem~\ref{thm:seeking}, while dead-reckoned pose
error reaches $6.95$\,m in this representative $1$\,cm/s-bias trial.}
\label{fig:longhorizon}
\end{figure}

Table~\ref{tab:baselines} summarizes the paired comparison, and
Fig.~\ref{fig:longhorizon} shows the long-horizon contrast of
Theorem~\ref{thm:seeking}.

\section{Discussion}
\label{sec:discussion}

The fixed-lag smoother is a useful negative result rather than an omitted
success. On identical trials it matches the proposed estimator's median and
$90$th-percentile station errors ($0.064$ and $0.117$\,m) while requiring
about $50\times$ the estimator-update time. In the tested static-relay,
static-target regime, the certified closed-form registration therefore
extracts the task-relevant information already present in a window; adding a
trajectory-state optimization layer does not improve the quotient estimate.
This conclusion is deliberately scoped to the present measurement and
odometry model, not to loop-rich SLAM problems where smoothing can recover
additional state information.

The sharp refusal boundary in Fig.~\ref{fig:operating-envelope}(b) is also a
safety result. Above the supported translational-odometry regime, the
certificate does not silently turn a small residual into a precise global
claim: it withholds seeking because accumulated pose uncertainty makes the
relay-frame rotation unreliable. The diagonal and packet-only ablations show
why that behavior matters: an apparently tighter but correlation-blind
interval can substantially under-cover. Extending the deployed gate to the
full $SE(2)$ recursion of Appendix~\ref{app:certgen} is the direct route to
certifying heading uncertainty; the current heading-drift sweep remains an
empirical robustness test.

The quotient construction generalizes beyond a particular range-bearing
relay whenever an unknown but rigid local frame reports both a static task
vector and repeated ego-relative vectors, ego motion supplies metric
increments, and packets are associated in time. Differencing removes the
unknown translation, motion identifies the residual rotation, and the
recovered transform makes the task vector actionable without reconstructing
a globally anchored trajectory. Moving targets, unassociated packets,
nonrigid relay frames, or persistent frame drift require additional dynamics
or data association and are outside the present guarantees.

\section{Conclusion}
\label{sec:conclusion}

This work establishes that the loss of every global pose reference does
not preclude reliable hidden-target seeking through a single unknown-pose
range-bearing relay: the complete vehicle-relay-target configuration
remains unobservable up to a global $SE(2)$ transformation, but the
information the task requires survives on an identifiable quotient. Two
distinct vehicle views recover the relay-to-odometry yaw and the target
displacement in closed form, so motion itself is the mechanism that
converts an otherwise ambiguous sensing configuration into an actionable
one.

The quotient structure enables more than localization-free estimation. The
closed-form multi-view self-calibration estimator's uncertainty explicitly
accounts for the correlations that integrated translational odometry
induces, reducing the corresponding first-order variance to an $O(K)$
suffix sum. This certificate closes the loop between observability,
estimation, and control: the vehicle excites only until its calibration
is sufficiently informative, refuses estimates that cannot be certified,
seeks directly in task coordinates, and detects relay-frame changes
through the same uncertainty model, with zero false adoptions observed
across $200$ control trials.

The resulting ultimate task-error bound is independent of mission horizon
(Theorem~\ref{thm:seeking}), even though the global dead-reckoning error
continues to accumulate: dead reckoning grows to tens of meters while
station error stays near the centimeter scale, and across $200$
randomized geometries
the closed loop is statistically indistinguishable from an oracle handed
the true relay yaw: the paired median difference is $0.001$\,m, and its
bootstrap 95\% interval includes zero. Physics-based ROS~2/Gazebo trials
retain $30/30$ success under nominal operation, combined communication
degradation, and relay-frame disturbances.

More broadly, the result is not specific to this sensing configuration: a
vehicle need not reconstruct an unobservable global state when the control
objective lives entirely on an identifiable quotient. Future work is a
spread-growing acquisition law with an explicit threshold-crossing
guarantee, extending the certified envelope to harsher odometry, correlated
rotational drift, moving targets and relays, and the hardware validation
supported by the accompanying ROS~2 pipeline.

\section*{Acknowledgment}
OpenAI Codex and Anthropic Claude assisted with implementation and
verification tooling for the simulations and ROS~2/Gazebo experiments in
Sec.~\ref{sec:experiments}, and with language editing throughout the
manuscript. The authors independently verified every mathematical claim,
code path, figure, and reported result and bear full responsibility for the
content.


\appendices

\section{Quotient Geometry and Local Rank}
\label{app:quotient}

Fix a window of $K$ views with known odometric poses $s_1,\dots,s_K$
(Sec.~\ref{sec:problem}). The unknowns are the quotient
$(\theta,x_o,p_o)\in S^1\times\R^2\times\R^2$, a five-dimensional space,
and the relay reports, at each view $k$, the pair
$(\ell_k^v,\ell_k^t)$ given by \eqref{eq:reduced}. Let
\begin{equation}
S = \begin{pmatrix} 0 & -1 \\ 1 & 0\end{pmatrix},
\qquad
\frac{d}{d\theta}\Rot(-\theta)v = -S\Rot(-\theta)v,
\label{eq:Sgen}
\end{equation}
be the generator of planar rotation. Differentiating
$\ell_k^v=\Rot(-\theta)(s_k-x_o)$ and $\ell_k^t=\Rot(-\theta)(p_o-x_o)$
termwise gives the stacked Jacobian of the noiseless packet map at view
$k$, with respect to $(\theta,x_o,p_o)$:
\begin{equation}
J_k = \begin{pmatrix} -S\ell_k^v & -\Rot(-\theta) & 0 \\
-S\ell_k^t & -\Rot(-\theta) & \Rot(-\theta) \end{pmatrix} \in \R^{4\times5}.
\label{eq:stackedjac}
\end{equation}

\textbf{Target rows carry no view index.} Since $p_o,x_o,\theta$ do not
depend on $k$, the target-packet block
$(-S\ell_k^t,-\Rot(-\theta),\Rot(-\theta))$ is \emph{identical for every
$k$}: repeating a view yields further copies of the same two rows, which
span no new direction. Repeated target measurements therefore strictly
improve precision, since each carries an independent noise realization
that reduces variance, but they contribute no additional rank to the
quotient beyond what one such row already supplies. This is the fact the
main text
uses implicitly when eliminating $p_o$ from the registration estimator
\eqref{eq:registration}, which depends on the vehicle packets $\ell_k^v$
alone.

\textbf{Single-view rank.} At $K=1$, stacking $J_1$ gives a $4\times5$
matrix. Its two $\Rot(-\theta)$ blocks are full rank $2$ on the disjoint
column pairs $(x_o,p_o)$, so the only possible rank deficiency is between
the $\theta$-column and those blocks. Direct substitution shows $J_1$
annihilates the tangent, at $\alpha=0$, of the one-parameter family
$(\theta+\alpha,\, s_1-\Rot(\theta+\alpha)\ell_1^v,\,\cdot)$ of
Corollary~\ref{cor:repeated} (that family reproduces $\ell_1^v,\ell_1^t$ to
first order for every $\alpha$), and no other direction:
$\operatorname{rank} J_1 = 4$. This exhibits the ``rank four of five'' of
Corollary~\ref{cor:repeated} as the explicit null space of
\eqref{eq:stackedjac}, rather than only asserting it.

\textbf{Two-view rank.} A second view with $s_2\ne s_1$ contributes a
further vehicle-packet block $(-S\ell_2^v,-\Rot(-\theta),0)$. Because the
$x_o$ block $-\Rot(-\theta)$ is view-independent and the target columns
are zero in vehicle rows, the row space of the stacked Jacobian contains
the \emph{difference} of the two vehicle-packet blocks,
\begin{equation*}
\Delta J_v \;=\; \bigl[\,-S\bigl(\ell_2^v-\ell_1^v\bigr),\;\; 0,\;\; 0\,\bigr].
\end{equation*}
The reduced model gives $s_2-s_1=\Rot(\theta)\bigl(\ell_2^v-\ell_1^v\bigr)$,
and $\Rot(\theta)$ is invertible, so $s_2\ne s_1$ forces
$\ell_2^v\ne\ell_1^v$. Note the two relay vectors need only be
\emph{different vectors}, not differently directed: two vehicle positions
on the same ray from the relay give parallel but unequal $\ell^v$, and the
argument covers that case (a parallelism-based argument would not). The
single-view null direction identified above has a nonzero
$\theta$-component; applying $\Delta J_v$ to it yields
$-S(\ell_2^v-\ell_1^v)\ne0$ since $S$ is invertible. The second view thus
supplies the missing yaw direction, no nonzero tangent is annihilated, and
the stacked Jacobian over the two views has full rank $5$: this is the
local (first-order, differential) counterpart of the global closed-form
identifiability already proved constructively by
Theorem~\ref{thm:twoview}.

\section{Native Range-Bearing Information}
\label{app:fisher}

The main text's certificate (Lemma~\ref{lem:conditional},
Theorem~\ref{thm:certificate}) is stated for isotropic Cartesian packet
noise of variance $\sigma^2$, with $\sigma_{\perp,k}^2$ denoting its
component perpendicular to $b_k$. Here we derive $\sigma_{\perp,k}^2$ from
the sensor's native, anisotropic range-bearing noise model
$(\sigma_r,\sigma_\beta)$ of \eqref{eq:packets}, and derive the yaw Fisher
information this induces.

For a relative vector $u\in\R^2\setminus\{0\}$ with polar representation
$(r,\beta)=(\lVert u\rVert,\angle u)$, the Jacobian mapping a Cartesian
perturbation $du$ to the induced polar perturbation $(dr,d\beta)$ is
\begin{equation}
P(u) = \begin{pmatrix} u^{\!\top}/\lVert u\rVert \\
(Su)^{\!\top}/\lVert u\rVert^2 \end{pmatrix} \in \R^{2\times2},
\label{eq:polarjac}
\end{equation}
with $S$ as in \eqref{eq:Sgen}: the first row is the radial gradient of
$r$, the second the tangential gradient of $\beta$. Range and bearing are
measured with independent native noise $(\sigma_r,\sigma_\beta)$, so by
the standard Fisher-information transformation rule under a bijective
reparametrization, the information matrix about $u$ induced by one
range-bearing measurement is, to first order,
\begin{equation}
\begin{split}
Q(u) &= P(u)^{\!\top}\operatorname{diag}(\sigma_r^{-2},\sigma_\beta^{-2})\,P(u)\\
&= \frac{uu^{\!\top}}{\sigma_r^2\lVert u\rVert^2}
+ \frac{(Su)(Su)^{\!\top}}{\sigma_\beta^2\lVert u\rVert^4}.
\end{split}
\label{eq:qform}
\end{equation}
The two terms are orthogonal rank-one projectors (onto $u$ and onto
$Su\perp u$) scaled by $1/\sigma_r^2$ and $1/(\sigma_\beta^2\lVert
u\rVert^2)$: range noise is informative only along $u$ (radial), bearing
noise only along $Su$ (tangential, perpendicular to $u$). The
$\lVert u\rVert^4$ denominator (equivalently $\lVert u\rVert^2$ after
normalizing $Su$ to a unit projector) is geometric, not cosmetic: a
bearing error $\sigma_\beta$ corresponds to a lateral \emph{position}
error $\lVert u\rVert\sigma_\beta$ that grows with range, so the position
information it carries decays as $1/\lVert u\rVert^2$. Dimensional
consistency confirms the power: both terms of $Q(u)$ must carry units of
$1/\mathrm{length}^2$.

\textbf{Reduction to $\sigma_{\perp,k}^2$.} The registration estimator
\eqref{eq:registration} is a ratio of cross and dot products of $b_k$
against $a_k$; its sensitivity to a packet perturbation $\nu_k$ of
$\ell_k^v$ (Step 2 of Theorem~\ref{thm:certificate}'s proof) is
$d\hat\theta=(1/W)\sum_k w_k(\nu_k\times b_k)$, which depends on $\nu_k$
\emph{only through its component perpendicular to the centered vector
$b_k$}. This perpendicular direction is defined by
$b_k=\ell_k^v-\bar\ell_w$, whereas native range noise acts radially along
the \emph{raw} measurement vector $\ell_k^v$; the two directions differ in
general, so range noise does \emph{not} drop out of the yaw variance.
Writing the native per-packet Cartesian covariance as
\begin{equation}
\Sigma_{\ell,k} \;=\; \sigma_r^2\,\hat u_{r,k}\hat u_{r,k}^{\!\top}
\;+\; r_k^2\sigma_\beta^2\,\hat u_{t,k}\hat u_{t,k}^{\!\top},
\qquad \hat u_{t,k}=S\hat u_{r,k},
\label{eq:pktcov}
\end{equation}
with $\hat u_{r,k}=\ell_k^v/\lVert\ell_k^v\rVert$ and
$r_k=\lVert\ell_k^v\rVert$, the packet-noise term of the certificate is
\begin{equation}
\operatorname{Var}_{\rm pkt}(\hat\theta)
= \frac{1}{W^2}\sum_k w_k^2\,(Sb_k)^{\!\top}\Sigma_{\ell,k}\,(Sb_k),
\label{eq:pktvar}
\end{equation}
equivalently, in the main text's form
$\sum_k w_k^2\lVert b_k\rVert^2\sigma_{\perp,k}^2/W^2$,
\begin{equation}
\sigma_{\perp,k}^2
= \hat b_{\perp,k}^{\!\top}\,\Sigma_{\ell,k}\,\hat b_{\perp,k}
= \sigma_r^2\bigl(\hat u_{r,k}\!\cdot\!\hat b_{\perp,k}\bigr)^2
+ r_k^2\sigma_\beta^2\bigl(\hat u_{t,k}\!\cdot\!\hat b_{\perp,k}\bigr)^2,
\label{eq:sigmaperp}
\end{equation}
with $\hat b_{\perp,k}=Sb_k/\lVert b_k\rVert$. Both native channels
contribute whenever the window geometry makes them relevant; the radial
channel vanishes only in the special geometry $b_k\parallel\ell_k^v$ for
every $k$. This projection is exactly what the deployed estimator
evaluates per packet.

\begin{remark}[Range noise can enter at first order]
Take $\ell_1^v=(1,0)^{\!\top}$, $\ell_2^v=(0,1)^{\!\top}$ with equal
weights, so $b_1=(\tfrac12,-\tfrac12)^{\!\top}$. A radial perturbation at
view 1 is parallel to $(1,0)^{\!\top}$, and its cross product with $b_1$
is $\tfrac12\ne0$: range noise contributes to first-order yaw uncertainty.
Only the component of $\nu_k$ along $b_k$ cancels identically, not the
radial direction $\hat u_{r,k}$.
\end{remark}

\textbf{Fisher matrix and yaw Schur complement.} Using the Jacobian block
$H_k=[-S\ell_k^v,\,-\Rot(-\theta)]$ of \eqref{eq:stackedjac} restricted to
$(\theta,x_o)$, the Fisher information about $(\theta,x_o)$ contributed by
view $k$ is $I_k = H_k^{\!\top} Q(\ell_k^v)\, H_k$, a $3\times3$ matrix;
$Q$ already encodes the measurement noise, and estimator weights do not
enter the information bound. Summing over the window gives
$I(\theta,x_o)=\sum_k I_k$, and the information about $\theta$ alone, i.e.
the information remaining after optimally marginalizing the nuisance
$x_o$, is the exact Schur complement
\begin{equation}
I_\theta \;=\; I_{\theta\theta}
\;-\; I_{\theta x}\,I_{xx}^{-1}\,I_{x\theta}.
\label{eq:ithetagen}
\end{equation}
This is the form the deployed conditional-information diagnostic
evaluates per window, with the per-packet anisotropic $Q(\ell_k^v)$ of
\eqref{eq:qform}. A closed-form centering identity is available exactly
when each view's noise is isotropic with scalar information $q_k$, i.e.
$Q(\ell_k^v)=q_k I_2$: eliminating $x_o$ then reduces to centering the
regressors at their information-weighted mean, giving the
information-weighted spread
\begin{equation}
I_\theta = S_w := \sum_k q_k\,\bigl\lVert \ell_k^v-\bar\ell_q\bigr\rVert^2,
\qquad
\bar\ell_q=\frac{\sum_k q_k\,\ell_k^v}{\sum_k q_k}.
\label{eq:sw}
\end{equation}
In the homogeneous case $q_k=\sigma^{-2}$ this is $S_v/\sigma^2$ with the
unweighted spread $S_v$ of Lemma~\ref{lem:conditional}, the special case
quoted in the main text. For genuinely anisotropic $Q(\ell_k^v)$, e.g.
native range-bearing noise with $\sigma_r \ne r_k\sigma_\beta$, no single
centering point reproduces the matrix-weighted marginalization, and
\eqref{eq:ithetagen} must be evaluated as written.

\section{Joint Quotient Covariance under General $SE(2)$ Odometry}
\label{app:certgen}

The main-text certificate retains the full cross-view correlation generated
by integrated translational odometry while remaining an $O(K)$ scalar
calculation. This section gives the corresponding first-order construction
for body-frame $SE(2)$ increments and then propagates the result to the
complete control statistic $\xi_j=(\theta,e_j)\in S^1\times\mathbb R^2$.
Every covariance below belongs to one independent increment; no increment
covariance is differenced from its neighbor.

\subsection{Translation and heading increments}

Let $z_k=(s_k,\phi_k)$ be the odometric position and heading at view $k$,
and let $u_k=(\Delta r_k,\Delta\phi_k)$ be the nominal body-frame increment
from $k-1$ to $k$, with the discrete reconstruction convention
$s_k=s_{k-1}+R(\phi_{k-1})\Delta r_k$ and
$\phi_k=\phi_{k-1}+\Delta\phi_k$. For an independent, zero-mean increment perturbation
$\varepsilon_k=(\varepsilon_k^r,\varepsilon_k^\phi)$ with covariance
$Q_k\succeq0$, the first-order dead-reckoning error obeys
\begin{equation}
 \begin{aligned}
 \delta z_k&=F_k\delta z_{k-1}+L_k\varepsilon_k,\\
 F_k&=\begin{bmatrix}
 I_2&R(\phi_{k-1})S\Delta r_k\\0_{1\times2}&1
 \end{bmatrix},\\
 L_k&=\begin{bmatrix}R(\phi_{k-1})&0\\0_{1\times2}&1\end{bmatrix}.
 \end{aligned}
 \label{eq:se2error}
\end{equation}
where $S=\left[\begin{smallmatrix}0&-1\\1&0\end{smallmatrix}\right]$.
The first view fixes the odometry-frame gauge, so $\delta z_1=0$. With
$\Phi_{k,m}=F_kF_{k-1}\cdots F_{m+1}$ and $\Phi_{m,m}=I_3$,
\begin{equation}
 \operatorname{Cov}(\delta z_i,\delta z_j)
 =\sum_{m=2}^{\min(i,j)}
 \Phi_{i,m}L_mQ_mL_m^\top\Phi_{j,m}^\top .
 \label{eq:se2crosscov}
\end{equation}
This form retains anisotropy, translation--heading correlation within an
increment, and the long-range position correlation generated by integrated
heading error.

Let $E_p=[I_2\;0]$ select position and set
$h_k=E_p^\top g_k\in\mathbb R^3$, where $g_k$ is the registration
sensitivity in Theorem~\ref{thm:certificate}. Holding the registration
weights fixed at their noiseless linearization values, define
\begin{equation}
 r_K=h_K,\qquad r_m=h_m+F_{m+1}^\top r_{m+1},\qquad
 H_m=L_m^\top r_m .
 \label{eq:se2backward}
\end{equation}
Reordering the independent increment contributions in
$\delta\hat\theta_{\rm odo}=\sum_k h_k^\top\delta z_k$ gives
\begin{equation}
 \operatorname{Var}_{\rm odo}(\hat\theta)
 =\sum_{m=2}^{K}H_m^\top Q_m H_m .
 \label{eq:varodose2}
\end{equation}
The backward recursion is $O(K)$ despite the dense covariance in
\eqref{eq:se2crosscov}. If heading error is absent, the position dynamics
reduce to partial sums, $L_kQ_kL_k^\top=\Delta v_kI_2$, and
$r_m=\sum_{k\ge m}g_k=G_m$; therefore \eqref{eq:varodose2} reduces exactly
to the translation-only suffix sum in \eqref{eq:varfull}.

\begin{remark}[Deployed and generalized certificates]
The online implementation and every coverage statement in the main text use
the translation-only specialization \eqref{eq:varfull}. Equation
\eqref{eq:varodose2} is the analytic extension needed to certify general
$SE(2)$ odometry, but it does not retroactively make the existing
heading-drift sweep a coverage experiment. Empirically, all $200/200$ trials
still complete at $0.5^\circ$/s heading drift and median station RMSE rises
from $0.064$ to $0.080$\,m; this is a robustness result, not a coverage claim
for the deployed certificate.
\end{remark}

\subsection{Complete yaw--task covariance}

The controller acts on $e_j=R(\theta)d_j$, where
$d_j=\ell_j^t-\ell_j^v$. Let $C_j^v$ and $C_j^t$ denote the local Cartesian
covariances obtained from the native range-bearing packets and write their
independent perturbations as $\nu_j^v$ and $\nu_j^t$. Then
$\delta d_j=\nu_j^t-\nu_j^v$ has covariance
$C_j^d=C_j^t+C_j^v$. The same vehicle packet also contributes to the
registration angle. With
\begin{equation*}
 a_j^v=\frac{w_j}{W}(-Sb_j),
 \qquad
 \delta\hat\theta_{\rm pkt}=\sum_k(a_k^v)^\top\nu_k^v,
\end{equation*}
the cross-covariance discarded by a diagonal approximation is
\begin{equation}
 c_j:=\operatorname{Cov}(\delta d_j,\delta\hat\theta)
 =-C_j^v a_j^v .
 \label{eq:taskcross}
\end{equation}
Odometry increments are independent of relay packets and hence contribute
to $\sigma_\theta^2=\operatorname{Var}(\hat\theta)$ through
\eqref{eq:varodose2}, but add no term to \eqref{eq:taskcross}.

For small yaw error, set $q_j=R(\theta)Sd_j$ and linearize
$\delta e_j=q_j\delta\hat\theta+R(\theta)\delta d_j$. The complete
first-order covariance of $\xi_j=(\theta,e_j)$ is
\begin{equation}
 \Sigma_{\xi_j}=A_j
 \begin{bmatrix}\sigma_\theta^2&c_j^\top\\c_j&C_j^d\end{bmatrix}A_j^\top,
 \qquad
 A_j=\begin{bmatrix}1&0_{1\times2}\\q_j&R(\theta)\end{bmatrix}.
 \label{eq:quotientcov}
\end{equation}
Writing its lower-right block as $\Sigma_{e_j}$, the first-order Gaussian
confidence ellipse
\begin{equation}
 \mathcal E_{e,j}(1-\alpha)=
 \{\delta e:\delta e^\top\Sigma_{e_j}^{-1}\delta e
 \le\chi^2_{2,1-\alpha}\}
 \label{eq:taskellipse}
\end{equation}
has probability $1-\alpha$. A scalar supervisor gate follows without
discarding the ellipse's worst direction:
\begin{equation}
 \Pr\!\left\{\lVert\delta e_j\rVert\le
 \sqrt{\chi^2_{2,1-\alpha}\lambda_{\max}(\Sigma_{e_j})}\right\}
 \ge 1-\alpha .
 \label{eq:taskchance}
\end{equation}
The three-dimensional ellipsoid formed from $\Sigma_{\xi_j}$ similarly
provides a joint yaw--task certificate using $\chi^2_{3,1-\alpha}$. Thus the
deployed yaw gate is a scalar projection of a complete quotient-level chance
certificate rather than a separate uncertainty construction.

\section{Hybrid Control and Change Detection}
\label{app:control}

This appendix collects the expanded proofs behind the closed-loop
guarantees of Sec.~\ref{sec:closedloop}: the realization gap between the
virtual seeking law and the deployed unicycle controller
(Lemma~\ref{lem:unicycle}, stated and proved in Sec.~\ref{sec:closedloop}
itself once the certified-seeking theorem is in hand), the acquisition
dichotomy, the change-detection guarantee together with its exact
detection-probability formula, and the small-step no-adoption case.

\subsection{Expanded Proof of Proposition~\ref{prop:acquisition}}
\label{app:acquisition}

\emph{Step 1 (spread growth on the arc).}
Let the acquisition arc have speed $v_e$ and curvature $\kappa$, sampled at
packet period $T_s$, and let the window hold the most recent $K$ views. Two
views separated by arc time $\tau$ have chord length
\begin{equation*}
c(\tau) = \frac{2}{\kappa}\Bigl\lvert\sin\Bigl(\frac{\kappa v_e \tau}{2}\Bigr)\Bigr\rvert
\;\ge\; \frac{2 v_e \tau}{\pi}, \qquad \kappa v_e\tau \le \pi.
\end{equation*}
The relay map \eqref{eq:reduced} is an isometry of the view set, so the
centered relay vectors $b_k$ inherit the same pairwise distances as the
poses, and the window spread therefore satisfies
\begin{equation*}
S_v \;\ge\; \frac{1}{K}\sum_{k} c\bigl(\lvert k - \bar k\rvert T_s\bigr)^2,
\end{equation*}
a strictly positive, geometry-determined quantity that is nondecreasing as
the window fills.

\emph{Step 2 (packet term decay).}
Suppose $0<\underline w\le w_k\le\bar w$ and
$\sigma_{\perp,k}^2\le\bar\sigma^2$. Since
$\sum_k w_k^2\lVert b_k\rVert^2\le\bar w W$ and
$W\ge\underline w S_v$, the packet contribution to
\eqref{eq:varfull} satisfies the correct weighted bound
\begin{equation*}
 \frac{\sum_k w_k^2\lVert b_k\rVert^2\sigma_{\perp,k}^2}{W^2}
 \le \frac{\bar\sigma^2\bar w}{\underline w S_v}.
\end{equation*}
With $S_v$ bounded below by Step 1 and growing until the window saturates,
this upper bound decreases to a positive constant determined by the
saturated-arc spread $S_v^\infty$ and the weight ratio
$\bar w/\underline w$.

\emph{Step 3 (odometry floor).}
Each inter-view interval contributes increment variance
$\Delta v = \sigma_s^2 T_s/\delta t$ per axis, so the exact odometry term
of a saturated deterministic arc is
\begin{equation*}
 v_{\rm odo}^{\infty}:=\Delta v
 \sum_{m=2}^{K}\lVert G_m^{\infty}\rVert^2.
\end{equation*}
The sensitivities scale as
$\lVert g_k\rVert\sim\lVert b_k\rVert/S_v$, which gives the separate
order relation
\begin{equation*}
 v_{\rm odo}^{\infty}
 \asymp \frac{\sigma_s^2 T_s K}{\delta t\,S_v^{\infty}}
\end{equation*}
up to fixed arc-geometry constants. The first expression is the exact
first-order floor for the saturated reference arc; the second only explains
its scaling and is not used as an equality or lower bound.

\emph{Step 4 (threshold crossing, not monotonicity).}
By Steps 1--3, the packet-noise upper bound decreases while the window fills,
and the deterministic reference geometry reaches the exact odometry term
$v_{\rm odo}^{\infty}$ at saturation. This does not make
$B(\mathcal K)$ monotone once the window slides: successive windows share
only $K-1$ views, so noisy realized weights and geometry can move the
certificate in either direction. Two conclusions remain valid.
\emph{Saturated-window refusal is exact for the reference arc}: if
$v_{\rm odo}^{\infty}\ge(\bar\delta/z_{0.975})^2$, its nonnegative packet
term makes every saturated reference window fail the confidence gate.
\emph{Certification remains a realized threshold-crossing condition}: if
the exact saturated total variance lies below threshold with strict margin,
the reference fill trajectory reaches a certifiable window in finite time.
For noisy sliding windows, obtaining $L$ consecutive certified windows
requires an additional stationarity or ergodicity condition; the main
proposition therefore states the guarantee conditionally on those realized
windows and leaves their frequency to the empirical validation in
Sec.~\ref{sec:experiments}.
\hfill$\blacksquare$

\subsection{Expanded Proof of Proposition~\ref{prop:recovery}}
\label{app:recovery}

\emph{Step 1 (false-alarm bound).}
Under no step, a certified window estimate satisfies
$\hat\theta_w = \theta + \epsilon_w$ with
$\epsilon_w/\sqrt{\operatorname{var}(\hat\theta_w)}$ approximately
standard normal by the coverage calibration of
Remark~\ref{rem:coverage}. The standardized discrepancy against the
control path is then $\chi^2_1$ to first order, and a single window
exceeds the threshold $\chi^2_{1,1-\alpha_\chi}$ with probability
$\alpha_\chi$. Windows separated by a full turnover share no packets, so
their exceedances are independent up to the common control path, which is
held fixed between adoptions; the probability that $L$ consecutive
certified windows spanning $c$ turnovers all exceed the threshold is at
most $\alpha_\chi^{\,c}$. Rate limiting can only decrease the adoption
probability further.

\emph{Step 2 (detection probability).}
After a step of size $\Delta$, post-step certified windows estimate
$\theta + \Delta$ while the control path holds $\theta$. The standardized
discrepancy is noncentral $\chi^2_1$ with noncentrality
$\lambda = \Delta^2/(\operatorname{var}(\hat\theta_w) +
\operatorname{var}_{\rm ctrl})$, so a single post-step window's detection
probability is exactly
\begin{equation}
P_D = \Pr\bigl(\chi^2_1(\lambda) > \tau\bigr) = 1 - F_{\chi^2_1(\lambda)}(\tau),
\qquad \tau=\chi^2_{1,1-\alpha_\chi},
\label{eq:pdetect}
\end{equation}
with $F_{\chi^2_1(\lambda)}$ the noncentral-$\chi^2$ CDF with one degree of
freedom. Since $F_{\chi^2_1(\lambda)}(\tau)\to0$ as $\lambda\to\infty$ for
any fixed $\tau$, $P_D\to1$ as $\Delta$ grows relative to the certified
band, recovering the qualitative claim of the main text quantitatively.
Once every window votes ($P_D\approx1$), the persistence counter advances
by one per packet, so adoption completes within $L$ packets after the
first fully post-step window, that is, within $L$ windows plus one
turnover of motion-supplied certified views.

\emph{Step 3 (small steps need no action).}
Whenever the combined error satisfies $|\tilde\theta_{\rm old}|+|\Delta|
\le\delta_c<\pi/2$ (Proposition~\ref{prop:recovery}(iii)), the triangle
inequality bounds the post-step yaw error by exactly this combined
quantity, so the perturbed yaw error still satisfies the hypothesis of
Theorem~\ref{thm:seeking} with $\delta_c$ in place of $\bar\delta$, and
contraction to the ultimate ball $\bar\eta/\cos\delta_c$ continues without
any adoption. This is the threshold-sharp behavior observed in the
validation campaign.
\hfill$\blacksquare$

\section{Expanded Evidence}
\label{app:tables}

All values below are from the same provenance-locked campaigns cited in
Sec.~\ref{sec:experiments} and the Gazebo campaign of
Table~\ref{tab:gazebo}; medians are reported with 90th percentiles or
bootstrap/Wilson 95\% intervals as indicated. No new campaign was run to
produce this appendix.

\subsection{Certificate Conditioning}

Table~\ref{tab:conditioning} is the full $20$-cell chord-length /
view-count conditioning study underlying Remark~\ref{rem:coverage}'s
pooled statistics: $1{,}000$ realizations per cell. It shows precisely
where the first-order certificate is weakest, namely short chords with
only two views, and where it is essentially exact.

\begin{table}[h]
\caption{Certificate conditioning: empirical-to-predicted yaw-variance
ratio and realized coverage vs. chord length and view count
($1{,}000$ trials/cell).}
\label{tab:conditioning}
\centering
\footnotesize
\setlength{\tabcolsep}{3pt}
\begin{tabular}{@{}lccccc@{}}
\toprule
Chord [m] & Views & Ratio & Cov90 & Cov95 & Cov99\\
\midrule
0.25 & 2  & $1.742$ & $0.869$ & $0.916$ & $0.971$\\
0.25 & 4  & $0.990$ & $0.911$ & $0.949$ & $0.978$\\
0.25 & 8  & $0.615$ & $0.963$ & $0.983$ & $0.997$\\
0.25 & 16 & $0.560$ & $0.978$ & $0.994$ & $0.998$\\
0.5  & 2  & $1.152$ & $0.910$ & $0.960$ & $0.988$\\
0.5  & 4  & $1.104$ & $0.893$ & $0.945$ & $0.985$\\
0.5  & 8  & $0.869$ & $0.923$ & $0.962$ & $0.994$\\
0.5  & 16 & $0.824$ & $0.938$ & $0.975$ & $0.997$\\
1.0  & 2  & $0.995$ & $0.915$ & $0.962$ & $0.993$\\
1.0  & 4  & $1.088$ & $0.888$ & $0.946$ & $0.983$\\
1.0  & 8  & $0.943$ & $0.909$ & $0.951$ & $0.993$\\
1.0  & 16 & $0.927$ & $0.916$ & $0.969$ & $0.996$\\
2.0  & 2  & $0.949$ & $0.907$ & $0.960$ & $0.997$\\
2.0  & 4  & $1.061$ & $0.892$ & $0.944$ & $0.987$\\
2.0  & 8  & $0.947$ & $0.904$ & $0.957$ & $0.993$\\
2.0  & 16 & $0.946$ & $0.915$ & $0.964$ & $0.994$\\
4.0  & 2  & $0.919$ & $0.906$ & $0.958$ & $0.999$\\
4.0  & 4  & $0.987$ & $0.900$ & $0.960$ & $0.992$\\
4.0  & 8  & $0.987$ & $0.906$ & $0.947$ & $0.992$\\
4.0  & 16 & $1.007$ & $0.888$ & $0.949$ & $0.992$\\
\bottomrule
\end{tabular}
\end{table}

The hardest cell ($0.25$\,m chord, two views) overstates precision by
$74\%$ (ratio $1.74$) and undercovers the nominal $95\%$ level at
$91.6\%$: with only two barely-separated views, the first-order
linearization is least accurate. Every cell at $\ge4$ views, or $\ge0.5$\,m
chord, recovers coverage within a few points of nominal, and by $1$\,m
chord the ratio is within $10\%$ of unity at every view count tested. This
conditioning map, not a generic noise sweep, is what tells the certificate
apart from an unconditional confidence claim: the acquisition arc
(Sec.~\ref{sec:closedloop}) is deliberately built to spend time at the
well-conditioned end of this table before certifying.

\subsection{Correlated-Odometry Certificate: Full Coverage Table}

Table~\ref{tab:oducoverage} is the complete $10$-condition table
underlying the pooled statistics of Remark~\ref{rem:coverage}: $K\in\{8,16\}$
views, translational noise $0.25$--$5$\,cm per increment, $2{,}000$
realizations per cell.

\begin{table}[h]
\caption{Correlated-odometry certificate coverage vs. increment noise and
window size ($2{,}000$ trials/cell).}
\label{tab:oducoverage}
\centering
\footnotesize
\setlength{\tabcolsep}{4pt}
\begin{tabular}{@{}ccccc@{}}
\toprule
$\sigma_s$ [cm/incr.] & Views & Ratio & Cov95 & Cov99\\
\midrule
$0.25$ & 8  & $0.969$ & $0.9485$ & $0.988$\\
$0.25$ & 16 & $1.047$ & $0.9425$ & $0.9885$\\
$0.5$  & 8  & $1.066$ & $0.9395$ & $0.9905$\\
$0.5$  & 16 & $1.011$ & $0.951$  & $0.989$\\
$1$    & 8  & $0.959$ & $0.950$  & $0.9885$\\
$1$    & 16 & $1.022$ & $0.943$  & $0.9915$\\
$2$    & 8  & $0.942$ & $0.9555$ & $0.9915$\\
$2$    & 16 & $0.979$ & $0.9495$ & $0.9895$\\
$5$    & 8  & $0.915$ & $0.9525$ & $0.9885$\\
$5$    & 16 & $0.880$ & $0.946$  & $0.9885$\\
\bottomrule
\end{tabular}
\end{table}

Coverage stays within about two points of the nominal $95\%$ level across
the full range from $0.25$ to $5$\,cm per increment and both tested window
sizes, with no monotone trend toward undercoverage at the harshest noise
level tested. This is consistent with the certificate remaining a
first-order (not asymptotic) approximation whose quality is set by the
conditioning of
Table~\ref{tab:conditioning} more than by the absolute odometry noise
level.

\subsection{Odometry Refusal Boundary}

Table~\ref{tab:odom} (main text, reproduced from the same campaign)
already distinguishes successful certification from principled refusal;
no wider odometry range was tested, since $2$--$5$\,cm per increment
already brackets the transition and both refusal rows show identical
behavior (never released).

\begin{table}[h]
\caption{Odometry degradation (proposed method). Refusal rows never
certify and never release seeking; the residual distance is the initial
geometry, reported for completeness, not as tracking error.}
\label{tab:odom}
\centering
\footnotesize
\setlength{\tabcolsep}{3pt}
\begin{tabular}{@{}lcccc@{}}
\toprule
Axis & Level & Outcome & Success & Median RMSE [m]\\
\midrule
$\sigma_s$ [cm/incr.] & $0.5$ & certified & $200/200$ & $0.063$\\
 & $1$ & certified & $188/200$ & $0.066$\\
 & $2$ & refusal & $0/200$ & (never released)\\
 & $5$ & refusal & $0/200$ & (never released)\\
bias [cm/s] & $2$ & certified & $200/200$ & $0.065$\\
 & $5$ & certified & $200/200$ & $0.071$\\
scale & $10\%$ & certified & $200/200$ & $0.066$\\
heading drift [deg/s] & $0.5$ & certified & $200/200$ & $0.080$\\
\bottomrule
\end{tabular}
\end{table}

\subsection{Relay Sensing and Communication Stress, Full Sweeps}

Tables~\ref{tab:relaynoise} and~\ref{tab:comm} are the complete
one-factor-at-a-time sweeps summarized in Fig.~\ref{fig:robustness}: $200$
randomized trials per cell, all $200/200$ successful.

\begin{table}[h]
\caption{Relay sensing noise sweep (proposed method).}
\label{tab:relaynoise}
\centering
\footnotesize
\setlength{\tabcolsep}{4pt}
\begin{tabular}{@{}lcccc@{}}
\toprule
Axis & Level & Success & Median RMSE [m] & q90 [m]\\
\midrule
range [m] & $0.025$ & $200/200$ & $0.067$ & $0.176$\\
 & $0.05$ & $200/200$ & $0.066$ & $0.156$\\
 & $0.10$ & $200/200$ & $0.060$ & $0.111$\\
 & $0.20$ & $200/200$ & $0.055$ & $0.086$\\
 & $0.40$ & $200/200$ & $0.078$ & $0.116$\\
bearing [deg] & $0.25$ & $200/200$ & $0.094$ & $0.178$\\
 & $0.5$ & $200/200$ & $0.088$ & $0.162$\\
 & $1$ & $200/200$ & $0.060$ & $0.114$\\
 & $2$ & $200/200$ & $0.087$ & $0.151$\\
 & $4$ & $200/200$ & $0.170$ & $0.288$\\
\bottomrule
\end{tabular}
\end{table}

\begin{table}[h]
\caption{Communication stress (proposed method).}
\label{tab:comm}
\centering
\footnotesize
\setlength{\tabcolsep}{4pt}
\begin{tabular}{@{}lcccc@{}}
\toprule
Axis & Level & Success & Median RMSE [m] & q90 [m]\\
\midrule
dropout & $10\%$ & $200/200$ & $0.056$ & $0.103$\\
 & $30\%$ & $200/200$ & $0.069$ & $0.105$\\
 & $50\%$ & $200/200$ & $0.080$ & $0.125$\\
delay [s] & $0.10$ & $200/200$ & $0.062$ & $0.103$\\
 & $0.25$ & $200/200$ & $0.062$ & $0.101$\\
 & $0.50$ & $200/200$ & $0.057$ & $0.099$\\
jitter [s] & $0.05$ & $200/200$ & $0.056$ & $0.106$\\
 & $0.10$ & $200/200$ & $0.063$ & $0.101$\\
 & $0.20$ & $200/200$ & $0.068$ & $0.109$\\
outliers & $1\%$ & $200/200$ & $0.085$ & $0.128$\\
 & $5\%$ & $200/200$ & $0.117$ & $0.187$\\
 & $10\%$ & $200/200$ & $0.149$ & $0.242$\\
 & $20\%$ & $200/200$ & $0.204$ & $0.321$\\
\bottomrule
\end{tabular}
\end{table}

\subsection{Paired Baselines and Runtime Distributions}

Table~\ref{tab:baselinesfull} expands Table~\ref{tab:baselines} with $90$th
percentiles and Wilson success intervals over the same $200$ paired
randomized geometries.

\begin{table}[h]
\caption{Baseline comparison, full distribution (200 paired trials; Wilson
95\% success interval, station RMSE median and q90).}
\label{tab:baselinesfull}
\centering
\footnotesize
\setlength{\tabcolsep}{2.5pt}
\resizebox{\columnwidth}{!}{%
\begin{tabular}{@{}lcccc@{}}
\toprule
Method & Success (Wilson 95\%) & Med.\ RMSE [m] & q90 [m] & Update [$\mu$s]\\
\midrule
Proposed & $200/200$ $[0.981,1]$ & $0.0639$ & $0.117$ & $3.60$\\
Prop.+smoother & $200/200$ $[0.981,1]$ & $0.0639$ & $0.117$ & $181.7$\\
Known-yaw oracle & $200/200$ $[0.981,1]$ & $0.0653$ & $0.110$ & $4.06$\\
Sequential EKF & $200/200$ $[0.981,1]$ & $0.0868$ & $0.172$ & $0.28$\\
Endpoint-only & $194/200$ $[0.936,0.986]$ & $0.0708$ & $0.169$ & $0.25$\\
Fixed-decay exc. & $200/200$ $[0.981,1]$ & $0.0625$ & $0.120$ & $3.49$\\
No excitation & $0/200$ $[0,0.019]$ & \multicolumn{2}{c}{never certified} & $3.62$\\
\bottomrule
\end{tabular}}
\end{table}

The $q90$ column shows the same ordering as the median: the sequential EKF
and endpoint-only registration are not just worse on average but have
heavier upper tails ($0.172$\,m and $0.169$\,m q90 vs.\ $0.117$\,m for the
proposed estimator), and the fixed-lag smoother's q90 is statistically
indistinguishable from the proposed estimator's at roughly $50\times$ the
runtime cost, reinforcing that the closed-form registration is
information-sufficient rather than merely competitive on the median.

\subsection{Disturbance Recovery: Denominators and Wilson Intervals}

Table~\ref{tab:disturbancefull} reports the full recovery statistics
underlying Fig.~\ref{fig:disturbance} and the main text's disturbance
paragraph, across all three tested mission phases (control: no step;
station: at rest; transit: mid-transit) and both excitation variants, with
exact trial counts and Wilson 95\% intervals. ``Adoption'' is the
calibration event (persistent certified inconsistency triggers a new
registration); ``task recovery'' is the separate, later event of task
error returning below tolerance. The two are not interchangeable, and
station-phase steps show the largest gap between them because the task
loop has no incentive to move (Remark~\ref{prop:recovery}, Scope) until
the stale calibration itself causes task error.

\begin{table}[h]
\caption{Disturbance recovery by mission phase and step size, supervised
excitation variant ($n$ per phase$\times$step cell; Wilson 95\% intervals).
Adoption/recovery times in seconds; ``--'' where no adoption is required.}
\label{tab:disturbancefull}
\centering
\footnotesize
\setlength{\tabcolsep}{2pt}
\resizebox{\columnwidth}{!}{%
\begin{tabular}{@{}llcccccc@{}}
\toprule
Phase & $\Delta$ & $n$ & Adoption rate [Wilson 95\%] & Task recovery [Wilson 95\%] & Detect delay [s] & Recover [s]\\
\midrule
Control & $0^\circ$ & 200 & $0.000$ $[0,0.019]$ & -- & -- & --\\
Station & $20^\circ$ & 150 & $0.120$ $[0.077,0.182]$ & $0.110$ $[0.069,0.172]$ & $20.85$ & $21.25$\\
Station & $40^\circ$ & 150 & $0.700$ $[0.622,0.768]$ & $0.667$ $[0.588,0.737]$ & $20.90$ & $20.90$\\
Station & $80^\circ$ & 150 & $1.000$ $[0.975,1]$ & $0.980$ $[0.943,0.993]$ & $21.05$ & $21.05$\\
Transit & $20^\circ$ & 150 & $0.673$ $[0.595,0.743]$ & $0.745$ $[0.668,0.809]$ & $4.20$ & $4.15$\\
Transit & $40^\circ$ & 150 & $0.773$ $[0.700,0.833]$ & $0.780$ $[0.707,0.839]$ & $6.30$ & $5.25$\\
Transit & $80^\circ$ & 150 & $1.000$ $[0.975,1]$ & $1.000$ $[0.975,1]$ & $7.15$ & $7.15$\\
\bottomrule
\end{tabular}}
\end{table}

Every cell retains $100\%$ task success (Table~\ref{tab:odom}'s success
column is unaffected by whether adoption occurs); what varies is only
whether and when the certificate re-calibrates. The control row's zero
adoptions in $200$ undisturbed trials gives the Wilson upper bound of
$1.9\%$ cited in the main text as the empirical false-adoption rate.

\subsection{Long-Horizon Distributions}

Table~\ref{tab:longhorizonfull} adds the $90$th-percentile station error
alongside the median dead-reckoning drift already shown in
Fig.~\ref{fig:longhorizon}, over the $600$\,s campaign at each tested body
bias ($100$ trials/cell).

\begin{table}[h]
\caption{600\,s long-horizon station error and dead-reckoning drift by
body-frame velocity bias (100 trials/cell).}
\label{tab:longhorizonfull}
\centering
\footnotesize
\setlength{\tabcolsep}{4pt}
\begin{tabular}{@{}cccc@{}}
\toprule
Bias [cm/s] & Med.\ RMSE [m] & q90 RMSE [m] & Med.\ dead-reck.\ [m]\\
\midrule
$0$   & $0.0637$ & $0.108$ & $1.39$\\
$1$   & $0.0618$ & $0.105$ & $4.98$\\
$2$   & $0.0607$ & $0.105$ & $10.97$\\
$5$   & $0.0577$ & $0.108$ & $27.20$\\
\bottomrule
\end{tabular}
\end{table}

Station error is flat across nearly two orders of magnitude of induced
drift: the q90 station error at $5$\,cm/s bias ($0.108$\,m) is
statistically indistinguishable from the bias-free case ($0.108$\,m),
while median dead-reckoning error grows from $1.4$\,m to $27.2$\,m over the
same range: the closed loop's error is set by the certificate's finite
calibration window, not by how far dead reckoning has drifted, exactly the
horizon-independence of Theorem~\ref{thm:seeking}.

\subsection{Gazebo Physics-Based Validation, Expanded}

Table~\ref{tab:gazebofull} expands Table~\ref{tab:gazebo} with
interquartile range, maximum error, and odometry-frame position/yaw RMSE
for all three scenarios ($30$ independent launches each, commit
\texttt{0af4309}). The combined-stress scenario applies $30\%$ packet
dropout, $0.2$\,s fixed delay, $0.1$\,s jitter, and $5\%$ gross outliers
simultaneously, each factor individually milder than the corresponding
offline one-factor sweep (Table~\ref{tab:comm}), but combined rather than
isolated, which is why station RMSE ($0.164$\,m) exceeds any single
offline stress cell despite the milder per-factor levels.

\begin{table}[h]
\caption{Gazebo validation, expanded (30 launches/scenario). Odom.\ RMSE is
against the simulator's own drive-train integration, not ground truth.}
\label{tab:gazebofull}
\centering
\footnotesize
\setlength{\tabcolsep}{2pt}
\resizebox{\columnwidth}{!}{%
\begin{tabular}{@{}lcccc@{}}
\toprule
Scenario & Station RMSE [m] & Max [m] & Odom.\ pos.\ [m] & Odom.\ yaw [rad]\\
\midrule
Nominal & $0.077$ $[0.051,0.094]$ & $0.139$ & $0.083$ $[0.067,0.097]$ & $0.015$ $[0.011,0.027]$\\
Comm.\ stress & $0.164$ $[0.149,0.198]$ & $0.238$ & $0.105$ $[0.064,0.152]$ & $0.100$ $[0.058,0.158]$\\
Disturbance & $0.043$ $[0.031,0.070]$ & $0.109$ & $0.064$ $[0.043,0.085]$ & $0.042$ $[0.034,0.050]$\\
\bottomrule
\end{tabular}}
\end{table}

The disturbance scenario's station RMSE is computed over the final
$30$\,s (post-recovery, matching Table~\ref{tab:gazebo}'s caption) and is
lower than nominal because the vehicle spends more of that window already
converged after the mid-transit step; its odometry-frame yaw RMSE
($0.042$\,rad) reflects ordinary drive-train integration error, not the
$60^\circ$ injected relay-frame step itself, which is a control-frame event
the odometry never observes.

\FloatBarrier

\bibliographystyle{IEEEtran}
\bibliography{references}

\end{document}